\newtheorem{theorem}{Theorem}
\newtheorem{remark}{Remark} 
\newtheorem{lem}{Lemma}
\newtheorem{assum}{Assumption}
\newtheorem{corollary}{Corollary}
\newtheorem{propos}{Proposition}
\DeclareMathOperator*{\argmin}{argmin}
\begin{document}
\title{Adaptive Lattice-based Motion Planning}
\author{
Abhishek Dhar, Sarthak Mishra, Spandan Roy and Daniel Axehill
\thanks{This work was supported by the excellence center ELLIIT and by Strategic Vehicle Research and Innovation, FFI, within the project iDecide.}
\thanks{Abhishek Dhar is with Technology and Innovation, Epiroc, Sweden. e-mail:
      ({\tt\small abhishek.dharr@gmail.com})}
\thanks{Sarthak Mishra and Spandan Roy are with the Robotics Research Center, International Institute of Information Technology, Hyderabad, India. e-mail:
       ({\tt\small  sarthak.mishra@research.iiit.ac.in, spandan.roy@iiit.ac.in})}
\thanks{Daniel Axehill is with the Division of Automatic Control, Department of Electrical Engineering, Linköping University, Sweden. e-mail:
       ({\tt\small daniel.axehill@liu.se})}
}

\maketitle
\begin{abstract}
This paper proposes an adaptive lattice-based motion planning solution to address the problem of generating feasible trajectories for systems, represented by a linearly parameterizable non-linear model operating within a cluttered environment. The system model is considered to have uncertain model parameters. The key idea here is to utilize input/output data online to update the model set containing the uncertain system parameter, as well as a dynamic estimated parameter of the model, so that the associated model estimation error reduces over time. This in turn improves the quality of the motion primitives generated by the lattice-based motion planner using a nominal estimated model selected on the basis of suitable criteria. The motion primitives are also equipped with tubes to account for the model mismatch between the nominal estimated model and the true system model, to guarantee collision-free overall motion. The tubes are of uniform size, which is directly proportional to the size of the model set containing the uncertain system parameter. The adaptive learning module guarantees a reduction in the diameter of the model set as well as in the parameter estimation error between the dynamic estimated parameter and the true system parameter. This directly implies a reduction in the size of the implemented tubes and guarantees that the utilized motion primitives go arbitrarily close to the resolution-optimal motion primitives associated with the true model of the system, thus significantly improving the overall motion planning performance over time. The efficiency of the motion planner is demonstrated by a suitable simulation example that considers a drone model represented by Euler-Lagrange dynamics containing uncertain parameters and operating within a cluttered environment. 
\end{abstract}
\section{Introduction}
Motion planning is a fundamental problem in robotics and autonomous systems, concerned with computing a sequence of feasible movements that a robot or agent can follow to reach a goal state from a given starting point while avoiding obstacles and satisfying physical constraints \cite{paden2016}. It plays a crucial role in enabling intelligent behavior in various applications such as robotic manipulation \cite{billard2019}, autonomous driving \cite{yurtsever2020}, drone navigation \cite{lu2018}, and mobile robotics \cite{siegwart2011}. Classical motion planning techniques include algorithms such as Rapidly-exploring Random Trees (RRT) \cite{karaman2011}, Probabilistic Roadmaps (PRM) \cite{kavraki1998}, and A* search \cite{erke2020}, which explore the robot’s configuration space to compute collision-free paths. These methods typically aim to balance computational efficiency with completeness, striving to find feasible solutions quickly, though not always guaranteeing optimality. Furthermore, these methods are not inherently capable of handling uncertainties in system model or the environment. To ensure reliable and robust operation under uncertain conditions, appropriate modifications or extensions are necessary within the design. \par
To address the challenge of handling uncertainties, robust and learning-based motion planning has emerged as a powerful alternative to classical planning techniques, offering improved performance by using data-driven models. Numerous studies have explored its potential across a wide range of applications. For example, \cite{faust2018} proposes PRM-RL, a hybrid framework that combines Probabilistic Roadmaps with reinforcement learning to enable efficient long-range navigation in challenging environments. Similarly, \cite{pfeiffer2017} introduces an end-to-end learning-based navigation system using deep convolution neural networks, capable of mapping sensor data directly to motion commands for navigation in cluttered indoor spaces. The article \cite{zhang2025} demonstrates the use of imitation learning with reinforcement learning for agile quadrotor flight through complex obstacle fields, showing that neural networks can learn aggressive maneuvers from expert demonstrations. The authors in \cite{ichter2018} developed a learning-based sampling distribution method that accelerates sampling-based planners by guiding exploration toward promising regions of the configuration space. Similarly, \cite{codevilla2018} explores conditional imitation learning for autonomous driving, where the planner adapts its behavior based on high-level commands. The works in \cite{manjunath,majumdar,gurgen} propose robust motion planning solutions for nonlinear systems under uncertainty highlighting various novelties -  use of control barrier functions along with the classical RRT strategy, employing funnel libraries to manage real-time disturbances and generating safety certificates for trajectories of nonlinear systems subject to additive disturbances, etc. The authors of \cite{tsukamoto}, in their work propose a learning-based robust motion planner that leverages contraction theory to ensure stability. The robust lattice-based motion planner proposed in \cite{dhar2023} utilizes all available uncertainty information to generate a safe motion plan, which delivers comparatively superior performance with respect to the strategies exploiting just the worst case knowledge of the uncertainty.\par
Despite these advancements, both robust as well as learning-based motion planning still faces significant challenges that hinder its widespread deployment in safety-critical and real-world applications. For learning-based motion planning solutions, one key limitation is the dependence on large, diverse, and well-annotated datasets for training; the performance of these systems often degrades significantly when exposed to out-of-distribution scenarios or environmental changes that were not represented during training. Additionally, many learning-based methods do not provide formal guarantees of safety, completeness, or optimality, which are crucial in applications such as autonomous driving and aerial robotics. This lack of theoretical guarantees can lead to unpredictable behavior in corner cases or rare events. Furthermore, training such models can be computationally intensive, and their generalization capability is often limited, requiring extensive retraining or domain adaptation when deployed in new environments. In case of robust motion planning solutions, in most cases a safe motion plan is ensured by using knowledge of worst-case of uncertainty, and they generally are unable to update information about uncertainties and incorporate them in the planning process. In these approaches improvements in planning performance are typically achievable only if the worst-case disturbance bounds are reduced, which hints at the systematic fusion of robust and learning-based motion planning strategies. This motivates to develop hybrid approaches that combine the robustness with learning capabilities with the aim to achieve structured reliability while guaranteeing system theoretic properties.\par
The aforementioned drawbacks are mitigated in this paper, where an adaptive lattice-based motion planner is proposed for systems represented by  a nonlinear model with parametric uncertainties. Lattice-based motion planners \cite{bergman,bergman1,ljungqvist} offer a more structured and deterministic approach by discretizing the state space into a grid or lattice and generates resolution-optimal motion plan. Each node in the lattice represents a reachable state, and edges correspond to precomputed, dynamically feasible motion primitives, which are generated by solving a suitably constructed two-point boundary-value optimal control problem. This framework is particularly advantageous for systems with complex dynamics or non-holonomic constraints, where random sampling methods may struggle to generate feasible or smooth trajectories. Despite the strong performance and structured nature of lattice-based motion planning strategies, these approaches do not inherently account for uncertainties, which poses a significant limitation in real-world applications. The proposed adaptive motion-planning strategy analytically guarantees generation of safe, feasible and resolution-optimal trajectories for uncertain nonlinear systems, by leveraging adaptive identification strategy \cite{landau,ioannou2006adaptive}, which improves the quality of available uncertainty knowledge online over time. The proposed strategy also incorporates robustness in the design by having suitable margin to obstacles, such that the planner generates collision free motion when learning is ongoing. It is intuitively clear that the gradual improvement in the uncertainty knowledge leads to reduction in required margin to obstacle and at the same time improvement in the quality of motion primitives in terms of optimality. \par
\noindent \textit{Summary of contributions:}\\
This paper systematically combines an adaptive identification strategy with a classical lattice-based motion planner. In this approach, a library of motion primitives is generated offline using multiple nominal model parameter estimates, which are static and structurally correlated to the true uncertain system model parameters. To address for the error in the generated motion primitives, arising due to the model mismatch between the nominal parameter estimates and the true model parameters, the motion primitives are equipped with tubes of appropriate diameter, proportional to the size of the set containing the uncertain parameters in the parameter space. A dynamic estimated model is also considered, whose parameters are updated online. A gradient-descent-based multi-model adaptive learning strategy \cite{narendra2000adaptive,narendra2011,narendra2011mmac} is utilized to learn the dynamic parameter estimates as well as the domain of uncertainty in the parameter space. Before a motion is implemented, the library of motion primitives associated with the static nominal estimated parameter is chosen, such that the static nominal parameter estimate is closest to the dynamic parameter estimate at the initial time. Online, the motion is executed using the chosen library of motion primitives equipped with the tubes and it is guaranteed that the overall motion will always lies inside the tube. It is analytically proved that the optimal motion plan associated with the actual uncertain system always lie within the tube around the motion primitive utilized for the executed motion plan. The overall benefits of the proposed strategy are as follows:
\begin{enumerate}
    \item The proposed motion planner generates robust motion plan utilizing estimates of uncertain system parameters and the domain of uncertainty. A multi-model adaptive identification strategy is used to improve the quality of the parameter estimates as well as the uncertainty domain online utilizing measured system data, thus improving the  quality of the implemented motion plan over time.
    \item It is analytically proved that the multi-model adaptive identification strategy leads to reduction in the diameter of the model set containing the uncertain parameter. This in turn indicates that the size of the tubes and hence margin to obstacle are also gradually reduced thus potentially generating better routes over time.
    \item Since the tubes always contain the optimal motion plan associated with the uncertain system, gradual reduction in the size of tubes results in a reduction in deviation of the executed motion plan from the true optimal plan. 
\end{enumerate}
To summarize, the proposed motion planner is capable of handling model uncertainty and operational constraints simultaneously and incorporates robustness along with learning capabilities to ensure safe motion as well as gradual improvement of the planning performance over time. The claimed benefits of this approach are justified theoretically as well as through a suitable simulation experiment.\\~\\
\textbf{Notation:} Given two sets \begin{small}$\mathcal{P},\mathcal{Q}$\end{small} then \begin{small}$\mathcal{P}\oplus\mathcal{Q}\triangleq \big \{a+b:a\in \mathcal{P}, b\in \mathcal{Q}\big \}$, $\mathcal{P}\ominus \mathcal{Q}\triangleq \big \{a: a\oplus \mathcal{Q}\subseteq \mathcal{P} \big \}$\end{small}. $\mathbb{R}$ and $\mathbb{N}$ are the sets of all real and integer numbers, respectively and $\mathbb{N}_{[m:n]}\triangleq \{m,m+1,\cdots,n-1,n\}, \forall m,n\in \mathbb{N}$ and $m<n$. The notations $\|\cdot\|$ and $\|\cdot\|_{\infty}$ denote the $2$-norm and the infinity norm of the argument vectors/matrices, respectively. Given a vector $x\in \mathbb{R}^n$ and a non-singular matrix $Q\in \mathbb{R}^{n\times n}$, the  weighted vector norm $\|x\|_Q$ is defined as $\|x\|_Q\triangleq \sqrt{x^TQx}$. A set $P$ defined as $P = co\{p_1, p_2,\cdots,p_m\}$, where $(p_i, i) \in \mathbb{R}^n \times \mathbb{I}^+_m$ is a convex hull in $\mathbb{R}^n$ with $\{p_1,\cdots,p_m\}$ as the vertices.  A compact ball of radius $r \in \mathbb{R}$ and center $M \in \mathbb{R}^{n\times m}$ is defined as $\mathcal{B}_r(M) = \{M' | \|M' - M\|_F\leqslant r\}$. The diameter of a compact set $P$ is defined as diam$(P)=$ max$\{\|m -n\|_F : m, n \in P\}$.
\vspace{-0.05in}
\section{Problem Formulation}
The paper focuses on the motion-planning problem for uncertain nonlinear systems in the form 
\vspace{-0.04in}
\begin{equation}\label{eq:nonlinear uncertain dynamics}
\dot{x}(t)=\theta^x\phi(x(t))+\theta^uu(t)=\Theta X(t), \quad x(t_0)=x_{ini}
\vspace{-0.0in}
\end{equation}
\noindent where $\phi(x(t))$ is the regressor vector and $x(t)$, $u(t)$ are the system state and the control input vectors, respectively. The model parameters $\theta^x$ and $\theta^u$ are considered to be uncertain and are represented by a lumped parameter matrix $\Theta \triangleq [\theta^x,\theta^u]$. The lumped regressor $X(t)$ is defined as $X(t)\triangleq [\phi(x(t))^T,u(t)^T]^T$. The state and control inputs are subject to the following safety constraints
\vspace{-0.04in}
\begin{equation}
x(t)\in \mathcal{X}\backslash \mathcal{O}\triangleq \overline{\mathcal{X}}\subset \mathbb{R}^n;\quad u(t)\in \mathcal{U}\subset \mathbb{R}^m \nonumber
\vspace{-0.04in}
\end{equation}
\noindent 
where the region $\mathcal{O}\subset \mathcal{X}\subseteq \mathbb{R}^n$ represents the obstacle region in the state space, which must be avoided by the system \eqref{eq:nonlinear uncertain dynamics}. 
It is considered that the parametric uncertainty satisfies the following 
\vspace{-0.0in}
\begin{equation}\label{eq:theta bound}
\Theta \in \Psi\triangleq co\{\psi_2,\psi_2,\cdots,\psi_q\} , \forall i \in \mathbb{I}^+_2
\end{equation}
\noindent where $\psi_i\triangleq [\theta^x_i,\theta^u_i],\forall i\in \mathbb{I}^+_q$ are known. 
\begin{assum}
It is assumed that $\theta^u_i, \forall i\in \mathbb{I}^+_q$ are invertible.
\end{assum}
The considered motion-planning problem can be formulated as a constrained optimal control problem (COCP) as follows
\begin{subequations}\label{eq:path planning COCP}
\begin{align}
\min_{x(t),u(t),T_f} &\ \ J=\int_{0}^{T_f}l(x(t),u(t))dt \nonumber \\
& x(t_0)=x_{ini}; \quad x(T_f)=x_f \\
& \dot{x}(t)=\theta^x \Phi(x(t))+\theta^uu(t)\label{eq:dynamic constraint} \\
& x(t)\in \overline{\mathcal{X}};\quad u(t)\in \mathcal{U}
\end{align}
\end{subequations} 
The running cost $l(x,u)$ is chosen to define the performance measure $J$. The uncertainty $\Theta$ in \eqref{eq:dynamic constraint} is considered to satisfy \eqref{eq:theta bound}. The COCP in \eqref{eq:path planning COCP} is designed to return a feasible trajectory for the system in \eqref{eq:nonlinear uncertain dynamics} to travel from the initial state $x_{ini}$ to a desired final state $x_f$ while respecting all the imposed safety constraints. However, COCP \eqref{eq:path planning COCP} is ill-posed since the uncertainty term $\Theta$ in \eqref{eq:dynamic constraint} is not fully known and thus COCP \eqref{eq:path planning COCP} only illustrates a conceptually defined problem. This motivates the objective of this work, which is to reformulate the COCP \eqref{eq:path planning COCP} to account for the effect of model uncertainty while ensuring constraint satisfaction and to design an adaptive lattice-based motion planner, which will provide a feasible solution for the COCP.

\section{Adaptive Lattice-based Motion Planner}\label{sec-DPRMP}
In this section, nominal dynamics as well as estimated dynamics corresponding to the uncertain plant \eqref{eq:nonlinear uncertain dynamics} are introduced and an adaptive law is proposed which updates the estimated parameters as well as the domain of uncertainty. Subsequently an adaptive lattice-based motion planner is proposed which utilized the adaptive estimated dynamics for computing motion primitives.
\subsection{Nominal dynamics}
The domain of uncertainty $\Psi$ is discretized to obtain the set of discrete parameter estimates denoted as $\Psi_d(N)$ and defined as follows 
\begin{equation}\label{eq:discretized psi}
\Psi_d(N)\triangleq \{\overline{\Theta}_i, i\in \mathbb{I}_N\ |\ \overline{\Theta}_i\in \Psi, \forall i\in \mathbb{I}_N\}
\end{equation}
\noindent where $N$ is a finite integer indicating the number of discrete elements in $\Psi_d(N)$. The nominal dynamics associated with any $\overline{\Theta}_i, i\in \mathbb{I}_N$ is defined as follows
\begin{equation}\label{eq:nominal theta parameterized dynamics}
\dot{\overline{x}}_i(t)=\overline{\theta}^x_i\phi(\overline{x}_i(t))+\overline{\theta}^u_i\overline{u}_i(t); \ {\overline{x}}_i(t_0)=x(t_0)
\end{equation}
\noindent where $\phi(\cdot)$ is the same function used in \eqref{eq:nonlinear uncertain dynamics}. The nominal dynamics is used (later in this section) to compute motion primitives, which would be utilized by the adaptive lattice-based motion planner to compute the overall motion.
\subsection{Estimated dynamics and adaptive model set}
An estimated lumped parameter matrix $\widehat{\Theta}(t))$, corresponding to the uncertain parameter $\Theta$ is considered, where $\widehat{\Theta}(t)\triangleq [\widehat{\theta}^x(t),\widehat{\theta}^u(t)]$. Utilizing $\widehat{\Theta}(t)$, an estimated system is considered as follows
\begin{align}\label{eq:estimated dynamics}
\dot{\widehat{x}}(t)=\widehat{\theta}^x(t)\phi(x(t))+\widehat{\theta}^u(t)\widehat{u}(t)=\widehat{\Theta}(t)\widehat{X}(t)
\end{align}
where $\widehat{X}(t)=[x(t)^T,\widehat{u}(t)^T]^T$ and the estimated parameter $\widehat{\Theta}(t)=[\widehat{\theta}^x(t) \ \widehat{\theta}^u(t)]$ belongs to an adaptive model set $\mathcal{S}(t)$, whose vertices are updated following an adaptive update law (discussed later in this section) such that the following holds for all time
\begin{subequations}\label{eq:model set definition}
\begin{align}
& \widehat{\Theta}(t)\in \mathcal{S}(t)\triangleq \textnormal{co}\{\widehat{\psi}_1(t),\widehat{\psi}_2(t),\cdots,\widehat{\psi}_q(t),\} \label{eq:model set definition a}\\
& \widehat{\psi}_i(t) \triangleq [\widehat{\theta}^x_i(t),\widehat{\theta}^u_i(t)], \forall i\in \mathbb{I}_q^+\\
& \mathcal{S}(t_0)= \Psi \ \Rightarrow \ \widehat{\psi}_i(t_0)=\psi_i, \forall i\in \mathbb{I}_q^+ \label{eq:model set definition b}\\
& \widehat{\Theta}(t)=\sum_{i=1}^q \gamma_i \widehat{\psi}_i(t); \quad \sum_{i=1}^q \gamma_i=1 \label{eq:model set definition c}
\end{align}
\end{subequations}
Let the estimated dynamics associated with the vertices $\widehat{\psi}_i(t)$ be defined as follows
\begin{subequations}\label{eq:estimated dynamics vertices}
\begin{align}
& \dot{\widehat{x}}_i(t)=\widehat{\theta}^x_i(t)\phi(x(t))+\widehat{\theta}^u_i(t){u}(t)-\widehat{\theta}^u(t)\tilde{u}_i(t); \label{eq:estimated dynamics vertices a} \\
& \tilde{u}_i(t)\triangleq u(t)-\widehat{u}_i(t); \label{eq:estimated dynamics vertices b}\\
& \widehat{u}(t)=\sum_{i=1}^q \gamma_i\widehat{u}_i(t); \ \tilde{u}(t)= \sum_{i=1}^q \gamma_i\tilde{u}_i(t)\label{eq:estimated dynamics vertices c}
\end{align}
\end{subequations}
where $\gamma_i, \forall i\in \mathbb{I}^+_q$ satisfies \eqref{eq:model set definition c}. From \eqref{eq:estimated dynamics}, \eqref{eq:model set definition} and \eqref{eq:estimated dynamics vertices} the following is inferred
\begin{equation}
\sum_{i=1}^q \gamma_i \dot{\widehat{x}}_i(t)=\dot{\widehat{x}}(t) \Rightarrow \sum_{i=1}^q \gamma_i \widehat{x}_i(t)={\widehat{x}}(t)
\end{equation}
The vertices $\widehat{\psi}_i(t), \forall i \in \mathbb{I}^+_q$ are updated using the following update law and have the following properties
\begin{subequations}\label{eq:new adaptive update law theta without proj}
\begin{align}
& \dot{\widehat{\psi}}_i(t)=\Gamma \tilde{x}_i(t)X(t)^T ;\ X(t)\triangleq [\phi(x(t))^T,u(t)^T]^T ; \label{eq:new adaptive update law theta a}\\
& \tilde{x}_i(t)\hspace{-0.02in}\triangleq \hspace{-0.02in}x(t)\hspace{-0.02in}-\hspace{-0.02in}\widehat{x}_i(t); \ \sum_{i=1}^q \gamma_i {\tilde{x}}^i(t)\hspace{-0.02in}=\hspace{-0.02in}x(t)\hspace{-0.02in}-\hspace{-0.02in}\widehat{x}(t)\hspace{-0.02in}=\hspace{-0.02in}\tilde{x}(t)\label{eq:new adaptive update law theta b}\\
& \sum_{i=1}^q \hspace{-0.05in}\gamma_i\dot{\widehat{\psi}}_i(t)\hspace{-0.025in}=\hspace{-0.025in}\Gamma \sum_{i=1}^q \hspace{-0.05in}\gamma_i\tilde{x}_i(t)X(t)^T\hspace{-0.025in}=\hspace{-0.025in}\Gamma \tilde{x}(t)X(t)^T\hspace{-0.025in}=\hspace{-0.025in}\dot{\widehat{\Theta}}(t)
\label{eq:new adaptive update law theta c}\end{align}
\end{subequations}
Where $\Gamma > 0$ is the rate of parameter adaptation. The parameter estimation errors $\tilde{\psi}_i(t)$ associated with the estimated parameters $\widehat{\psi}_i(t)$ and the parameter estimation error $\tilde{\Theta}(t)$ associated with the estimated parameter $\widehat{\Theta}(t)$ are defined as follows 
\begin{subequations}
\begin{align}
&\tilde{\psi}_i(t)=\Theta-\widehat{\psi}_i(t)\label{eq:tilde psi i definition} \\
&=[\theta^x\ \theta^u]-[\widehat{\theta}^x_i(t)\ \widehat{\theta}^u_i(t)]=[\tilde{\theta}^x_i(t)\ \tilde{\theta}^u_i(t)], \forall i\in \mathbb{I}^+_q; \\
&\tilde{\Theta}(t)=\sum_{i=1}^q \gamma_i \tilde{\psi}_i(t)= \Theta-\sum_{i=1}^q \gamma_i\widehat{\psi}_i(t)= \Theta-\widehat{\Theta}(t)\label{eq:tilde theta definition}
\end{align}
\end{subequations}
\begin{theorem}\label{thm: Adaptive learning analysis}
If the inputs $\tilde{u}_i(t)$  and hence $\tilde{u}(t)$ is chosen as 
\begin{equation}\label{eq:tilde u equation}
\tilde{u}_i(t)\hspace{-0.02in}=\hspace{-0.02in}-\widehat{\theta}^u(t)^{-1}k\tilde{x}_i(t)\stackrel{\eqref{eq:estimated dynamics vertices c},\eqref{eq:new adaptive update law theta b}}{\Rightarrow} \tilde{u}(t)\hspace{-0.02in}=\hspace{-0.02in}-\widehat{\theta}^u(t)^{-1}k\tilde{x}(t)
\end{equation}
\noindent where $k$ is the controller gain and the estimated parameter $\psi_i(t), \forall i \in \mathbb{I}^+_q$ (and hence $\widehat{\Theta}(t)$) is updated following 
\eqref{eq:new adaptive update law theta}, then the following hold true
\begin{itemize}
\item[1.] The parameter estimation errors $\tilde{\psi}_i(t), \forall i\in \mathbb{I}^+_q$ (hence the parameter estimation error $\tilde{\Theta}(t)$) and the state estimation errors $\tilde{x}_i(t), \forall i\in \mathbb{I}^+_q$ (hence the state estimation error $\tilde{x}(t)$) are bounded.
\item[2.] $\tilde{x}_i(t), \forall i\in \mathbb{I}^+_q$ (hence $\tilde{x}(t)$) asymptotically converges to zero.
\end{itemize}
\end{theorem}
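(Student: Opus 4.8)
The plan is to reduce both claims to a single Lyapunov argument carried out vertex-by-vertex, since the aggregate quantities $\tilde{x}(t)$ and $\tilde{\Theta}(t)$ are convex combinations of the per-vertex errors through \eqref{eq:new adaptive update law theta b} and \eqref{eq:tilde theta definition}. First I would derive the error dynamics. Subtracting \eqref{eq:estimated dynamics vertices a} from \eqref{eq:nonlinear uncertain dynamics} and collecting the parameter errors $\tilde{\psi}_i(t)=[\tilde{\theta}^x_i(t),\tilde{\theta}^u_i(t)]$ against $X(t)$ gives $\dot{\tilde{x}}_i(t)=\tilde{\psi}_i(t)X(t)+\widehat{\theta}^u(t)\tilde{u}_i(t)$; substituting the chosen input \eqref{eq:tilde u equation} makes the last term collapse to $-k\tilde{x}_i(t)$, yielding the clean closed loop $\dot{\tilde{x}}_i(t)=\tilde{\psi}_i(t)X(t)-k\tilde{x}_i(t)$. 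Since $\Theta$ is constant, the parameter-error dynamics follow from \eqref{eq:new adaptive update law theta a} as $\dot{\tilde{\psi}}_i(t)=-\Gamma\tilde{x}_i(t)X(t)^T$.

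Next I would propose, for each $i\in\mathbb{I}^+_q$, the Lyapunov candidate $V_i(t)=\tfrac{1}{2}\|\tilde{x}_i(t)\|^2+\tfrac{1}{2\Gamma}\|\tilde{\psi}_i(t)\|_F^2$ (taking $\Gamma>0$ scalar; a symmetric matrix gain would simply be absorbed into a weighted Frobenius norm). Differentiating and inserting the two error dynamics, the key is that the cross term $\tilde{x}_i(t)^T\tilde{\psi}_i(t)X(t)$ produced by the state error exactly cancels the term $\mathrm{tr}(\tilde{\psi}_i(t)X(t)\tilde{x}_i(t)^T)$ produced by the parameter update, since both equal the same scalar by the cyclic property of the trace. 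What survives is $\dot{V}_i(t)=-k\|\tilde{x}_i(t)\|^2\le 0$. This immediately establishes Claim~1: each $V_i$ is nonincreasing and bounded below, so $\tilde{x}_i(t)$ and $\tilde{\psi}_i(t)$ are bounded for every $i$, and boundedness of the convex combinations $\tilde{x}(t)$ and $\tilde{\Theta}(t)$ follows because the weights $\gamma_i$ lie in $[0,1]$ and sum to one.

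For Claim~2 I would invoke Barbalat's lemma. Integrating $\dot{V}_i(t)=-k\|\tilde{x}_i(t)\|^2$ shows $\tilde{x}_i\in L_2$, and it remains to argue that $\dot{\tilde{x}}_i(t)$ is bounded so that $\tilde{x}_i$ is uniformly continuous; Barbalat then gives $\tilde{x}_i(t)\to 0$, and hence $\tilde{x}(t)\to 0$ by the same convex-combination argument. Here lies the \emph{main obstacle}: boundedness of $\dot{\tilde{x}}_i(t)=\tilde{\psi}_i(t)X(t)-k\tilde{x}_i(t)$ requires the regressor $X(t)=[\phi(x(t))^T,u(t)^T]^T$ to be bounded, which the Lyapunov inequality alone does not deliver. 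I would close this gap using the problem's standing constraints, namely $x(t)\in\overline{\mathcal{X}}$ and $u(t)\in\mathcal{U}$ with both sets bounded and $\phi$ continuous, so that $X(t)$ is uniformly bounded along any admissible trajectory; combined with the already-established boundedness of $\tilde{\psi}_i(t)$ this renders $\dot{\tilde{x}}_i(t)$ bounded and completes the proof. The delicate point to treat carefully is precisely this regressor-boundedness step, because the identification loop is driven by the externally supplied signals $x(t),u(t)$ rather than by a self-stabilizing feedback, so the asymptotic conclusion is genuinely conditional on the admissibility/compactness that the planning framework enforces.
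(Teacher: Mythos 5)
Your proposal is correct and follows essentially the same route as the paper: the same per-vertex error dynamics, the same Lyapunov function $\tfrac12\|\tilde{x}_i\|^2+\tfrac12\mathrm{tr}(\tilde{\psi}_i^T\Gamma^{-1}\tilde{\psi}_i)$ with the trace cancellation giving $\dot V=-k\|\tilde{x}_i\|^2$, and a Barbalat argument (the paper applies it to $f=\|\tilde{x}_i\|^2$ rather than to $\tilde{x}_i\in L_2$, an equivalent formulation) followed by the convex-combination step for $\tilde{x}$ and $\tilde{\Theta}$. The regressor-boundedness point you flag is handled in the paper exactly as you propose, by invoking boundedness of $\phi(x(t))$ on the admissible set $\mathcal{X}$.
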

\begin{proof}
Utilizing \eqref{eq:new adaptive update law theta b}, the error dynamics associated with the estimated systems in \eqref{eq:estimated dynamics vertices a} can be represented as follows
\begin{align}
\dot{\tilde{x}}_i(t)&=\tilde{\theta}^x_i(t)\phi(x(t))+\tilde{\theta}^u_i(t)u(t)+\widehat{\theta}^u(t)\tilde{u}_i(t) \nonumber \\
&= \tilde{\psi}_i(t)X(t)+\widehat{\theta}^u(t)\tilde{u}_i(t)\label{eq:xtilde i system}
\end{align}
\noindent With the control input $\tilde{u}_i(t)$ defined in \eqref{eq:tilde u equation}, the closed loop system in \eqref{eq:xtilde i system} becomes
\begin{equation}\label{eq:xtilde i system closed-loop}
\dot{\tilde{x}}_i(t)=\tilde{\psi}_i(t)X(t)-k\tilde{x}_i(t)
\end{equation}
To analyze the stability of the closed-loop system in \eqref{eq:xtilde i system closed-loop}, a Lyapunov function candidate is chosen as follows
\begin{equation}
V_{\tilde{x}}(t)=\frac{1}{2}\tilde{x}_i(t)^T\tilde{x}_i(t)+\frac{1}{2}tr\left (\tilde{\psi}_i(t)^T\Gamma^{-1}\tilde{\psi}_i(t)\right )\nonumber
\end{equation}
\noindent The derivative of $V_{\tilde{x}}(t)$ is now obtained as follows
\begin{align}\label{eq:Ve dot derivative}
&\dot{V}_{\tilde{x}}(t)=\tilde{x}_i(t)^T\dot{\tilde{x}}_i(t)+tr\left (\tilde{\psi}_i(t)^T\Gamma^{-1}\dot{\tilde{\psi}}_i(t)\right ) \nonumber \\
&\stackrel{\eqref{eq:tilde psi i definition},\eqref{eq:xtilde i system closed-loop}}{=} \tilde{x}_i(t)^T\left (\tilde{\psi}_i(t)X(t)-k\tilde{x}_i(t)\right )\hspace{-0.02in}-\hspace{-0.02in}tr\left (\tilde{\psi}_i(t)^T\Gamma^{-1}\dot{\widehat{\psi}}_i(t)\right )\nonumber \\
&\stackrel{\eqref{eq:new adaptive update law theta a}}{=} \hspace{-0.03in}-\hspace{-0.04in}k\|\tilde{x}_i(t)\|^2\hspace{-0.02in}+\hspace{-0.02in}\tilde{x}_i(t)^T\tilde{\psi}_i(t)X(t)\hspace{-0.02in}-\hspace{-0.02in}tr\left (\tilde{\psi}_i(t)^T\tilde{x}_i(t)X(t)^T\right )\nonumber \\
&= -k\|\tilde{x}_i(t)\|^2
\end{align}
The second equality is obtained by taking derivative of $\tilde{\psi}_i(t)$ with equating $\dot{\Theta}=0$ (since $\Theta$ is constant). The fourth equality is obtained from the third one by using the properties of $tr(\cdot)$ on matrix products. Since $\dot{V}_{\tilde{x}}(t)$ is negative semi-definite, it is claimed that Claim $1$ of the theorem holds. To prove Claim $2$, Barbalat's lemma (\textit{Lemma} $8.2$, \cite{khalil}) is invoked. Let a function $f(t)$ be defined as $f(t)=\|\tilde{x}_i(t)\|^2$. Now, $\dot{f}(t)=2\tilde{x}_i(t)^T(\tilde{\psi}_i(t)X(t)-k\tilde{x}_i(t))$, which is bounded since $\tilde{\psi}_i(t)$ and $\tilde{x}_i(t)$ are bounded (from the Lyapunov analysis in \eqref{eq:Ve dot derivative}) and $\phi(x(t))$ is bounded for all $x(t)\in \mathcal{X}$. This proves that $f(t)$ is uniformly continuous. Further, the following is obtained using \eqref{eq:Ve dot derivative}
\begin{equation}
\lim_{t\rightarrow \infty}\int_0^t f(t)dt= -\lim_{t\rightarrow \infty}\int_0^t \frac{dV_{\tilde{x}}(t)}{k}= \lim_{t\rightarrow \infty} \frac{V_{\tilde{x}}(t_0)-V_{\tilde{x}}(t)}{k}
\end{equation}
Since $V_{\tilde{x}}(t_0),k$ are finite and $\lim_{t\rightarrow \infty} V_{\tilde{x}}(t)$ is also finite (since $V_{\tilde{x}}$ is positive definite and $\dot{V}_{\tilde{x}}$ is negative semi-definite), it holds that $\lim_{t\rightarrow \infty}\int_0^t f(t)dt$ exists and is finite. Therefore, by invoking Barbalat's lemma, the following holds
\begin{equation}
\lim_{t\rightarrow \infty} f(t)= 0 \ \Rightarrow \ \lim_{t\rightarrow \infty} \tilde{x}_i(t)=0 \nonumber
\end{equation}
\noindent Since $\tilde{\psi}_i(t), \forall i\in \mathbb{I}^+_q$ are bounded, using \eqref{eq:tilde theta definition} it holds that $\tilde{\Theta}(t)$ is also bounded. Similarly since $\tilde{x}_i(t), \forall i\in \mathbb{I}^+_q$ are bounded and asymptotically converging to zero, using \eqref{eq:new adaptive update law theta b} it is claimed that $\tilde{x}(t)$ is also bounded and asymptotically converging to zero. This concludes the proof.
\end{proof}
The adaptive law in \eqref{eq:new adaptive update law theta without proj} is modified to include a projection operator as follows
\begin{equation}\label{eq:new adaptive update law theta}
\dot{\widehat{\psi}}_i(t)=\textnormal{Proj}_{\Psi}(\Gamma \tilde{x}_i(t)X(t)^T) \Rightarrow \dot{\widehat{\Theta}}(t)=\textnormal{Proj}_{\Psi}(\Gamma \tilde{x}(t)X(t)^T)
\end{equation}
where $\textnormal{Proj}_{\Psi}(\cdot)$ returns the orthogonal projection of the argument on the set $\Psi$ , thus ensuring the parameter estimates $\widehat{\psi}_i(t),\forall i\in \mathbb{I}^+_q$ (hence $\widehat{\Theta}(t)$) to stay inside $\Psi$ for all time. Details about the  $\textnormal{Proj}_{(\cdot)}(\cdot)$ operator can be found in \textit{Section $3.10$} \cite{ioannou2006adaptive}. 
\begin{lem}\label{lem:adaptive laws connection}
[\textit{Theorem $3.10.1$}, \cite{ioannou2006adaptive}] It is guaranteed that the projection-based adaptive update law \eqref{eq:new adaptive update law theta} retains all the properties of the adaptive update law in \eqref{eq:new adaptive update law theta without proj}.
\end{lem}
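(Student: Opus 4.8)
The statement is a restatement of the standard projection result (\textit{Theorem 3.10.1}, \cite{ioannou2006adaptive}), so the plan is not to reprove the general projection machinery from scratch but to verify that the specific Lyapunov argument already carried out in the proof of \textit{Theorem \ref{thm: Adaptive learning analysis}} survives the introduction of $\textnormal{Proj}_{\Psi}(\cdot)$ unchanged. The essential observation is that the true parameter $\Theta$ lies inside the convex set $\Psi$ on which we project (by \eqref{eq:theta bound}), and the projection is only active on the boundary of $\Psi$ where the unmodified update would push $\widehat{\psi}_i(t)$ outward. Under these circumstances the defining geometric property of the projection operator ensures the extra projection term enters the Lyapunov derivative with a favourable (non-positive) sign.

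Concretely, I would first recall the key inequality satisfied by $\textnormal{Proj}_{\Psi}(\cdot)$ for a convex set $\Psi$ containing $\Theta$: writing $g\triangleq \Gamma\tilde{x}_i(t)X(t)^T$ for the unprojected increment, the operator obeys $tr\!\left(\tilde{\psi}_i(t)^T\Gamma^{-1}\big(\textnormal{Proj}_{\Psi}(g)-g\big)\right)\geqslant 0$, since $\Theta-\widehat{\psi}_i(t)=\tilde{\psi}_i(t)$ points from the boundary back into $\Psi$ whenever the projection is active. I would then re-run the computation of $\dot{V}_{\tilde{x}}(t)$ from \eqref{eq:Ve dot derivative} using the projected law \eqref{eq:new adaptive update law theta} in place of \eqref{eq:new adaptive update law theta a}. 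The only change appears in the trace term, and the inequality above gives $-tr\!\left(\tilde{\psi}_i(t)^T\Gamma^{-1}\textnormal{Proj}_{\Psi}(g)\right)\leqslant -tr\!\left(\tilde{\psi}_i(t)^T\tilde{x}_i(t)X(t)^T\right)$, so that the cross term $\tilde{x}_i(t)^T\tilde{\psi}_i(t)X(t)$ is still cancelled (or over-cancelled) and one recovers $\dot{V}_{\tilde{x}}(t)\leqslant -k\|\tilde{x}_i(t)\|^2\leqslant 0$.

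With this inequality re-established, every subsequent step of the proof of \textit{Theorem \ref{thm: Adaptive learning analysis}} applies verbatim: negative semi-definiteness of $\dot{V}_{\tilde{x}}(t)$ yields boundedness of $\tilde{x}_i(t)$ and $\tilde{\psi}_i(t)$ (Claim 1), and the Barbalat argument on $f(t)=\|\tilde{x}_i(t)\|^2$ yields asymptotic convergence of $\tilde{x}_i(t)$ to zero (Claim 2), from which the corresponding statements for $\tilde{x}(t)$ and $\tilde{\Theta}(t)$ follow through \eqref{eq:new adaptive update law theta b} and \eqref{eq:tilde theta definition}. I would additionally remark that the projection now guarantees $\widehat{\psi}_i(t)\in\Psi$ for all $t$, so boundedness of the estimates holds directly from compactness of $\Psi$, independently of the Lyapunov bound. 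The main obstacle is purely the projection inequality itself; once its sign is pinned down correctly (using convexity of $\Psi$ and $\Theta\in\Psi$), the rest is a transparent reuse of the earlier analysis, which is exactly why the result can be attributed to \cite{ioannou2006adaptive}.
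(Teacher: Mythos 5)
The paper does not actually prove this lemma: it imports the result wholesale by citing \textit{Theorem 3.10.1} of \cite{ioannou2006adaptive}, so there is no in-paper argument to compare against. Your proposal is correct and is essentially the standard proof underlying that citation: the projection inequality $tr\big(\tilde{\psi}_i(t)^T\Gamma^{-1}(\textnormal{Proj}_{\Psi}(g)-g)\big)\geqslant 0$ (valid because $\Psi$ is convex and $\Theta\in\Psi$ by \eqref{eq:theta bound}) shows the extra term enters $\dot{V}_{\tilde{x}}$ with a non-positive sign, the state-error dynamics \eqref{eq:xtilde i system closed-loop} are unaffected since they depend on $\widehat{\psi}_i(t)$ and not on $\dot{\widehat{\psi}}_i(t)$, and the Barbalat step then carries over verbatim. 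The only point worth flagging is that the projection inequality holds in the $\Gamma^{-1}$-weighted inner product, which is harmless here because $\Gamma$ is a positive scalar adaptation gain; your additional observation that boundedness of $\widehat{\psi}_i(t)$ now also follows directly from compactness of $\Psi$ is a correct bonus the paper does not state.
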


\begin{lem}\label{lem:theta in S}
[\textit{Theorem 1}, \cite{han2011}] If $\Theta\in \mathcal{S}(t_0)$ and $\widehat{x}_i(t_0)=x(t_0), \forall i\in \mathbb{I}^+_q$ then with the adaptive update law \eqref{eq:new adaptive update law theta a}, $\Theta\in \mathcal{S}(t), \forall t> t_0$. 
\end{lem}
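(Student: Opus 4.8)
The plan is to exploit convexity of $\mathcal{S}(t)$ directly, by exhibiting a single set of \emph{fixed} barycentric weights that express $\Theta$ as a convex combination of the moving vertices $\widehat{\psi}_i(t)$ for all time. Since $\Theta\in \mathcal{S}(t_0)$, there exist constants $\lambda_i\geqslant 0$ with $\sum_{i=1}^q \lambda_i=1$ such that $\Theta=\sum_{i=1}^q \lambda_i\widehat{\psi}_i(t_0)$. I would freeze these weights and prove that the same identity survives for $t>t_0$, which is equivalent to showing that the weighted parameter error $A(t)\triangleq \sum_{i=1}^q \lambda_i\tilde{\psi}_i(t)$ remains at zero.

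First I would derive the coupled dynamics of the two aggregated quantities $A(t)=\sum_i \lambda_i\tilde{\psi}_i(t)$ and $b(t)\triangleq \sum_i \lambda_i\tilde{x}_i(t)$. Since $\Theta$ is constant, \eqref{eq:new adaptive update law theta a} gives $\dot{\tilde{\psi}}_i(t)=-\Gamma\tilde{x}_i(t)X(t)^T$, and the closed-loop error dynamics \eqref{eq:xtilde i system closed-loop} give $\dot{\tilde{x}}_i(t)=\tilde{\psi}_i(t)X(t)-k\tilde{x}_i(t)$. Multiplying by $\lambda_i$ and summing yields the linear, \emph{homogeneous} system $\dot{A}(t)=-\Gamma b(t)X(t)^T$ and $\dot{b}(t)=A(t)X(t)-kb(t)$. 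The essential feature is that there is no forcing term, so $(A,b)=(0,0)$ is an invariant equilibrium of this system.

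Next I would verify the initial data. By the choice of the weights, $A(t_0)=\Theta-\sum_i \lambda_i\widehat{\psi}_i(t_0)=0$; and the hypothesis $\widehat{x}_i(t_0)=x(t_0)$ gives $\tilde{x}_i(t_0)=0$ for every $i$, hence $b(t_0)=0$. Because the $(A,b)$ dynamics are linear with (at worst) continuous time-varying coefficients through $X(t)$, their solution is unique, and the identically-zero trajectory is the solution matching this initial data. Therefore $A(t)\equiv 0$, i.e. $\Theta=\sum_i \lambda_i\widehat{\psi}_i(t)$ with the same nonnegative weights summing to one for all $t>t_0$, which is precisely the statement $\Theta\in \mathcal{S}(t)$.

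The main obstacle is conceptual rather than computational: one must recognize that the barycentric weights can be held \emph{constant} instead of being allowed to drift as the vertices move. Once this is seen, the result collapses to the elementary invariance of the zero equilibrium of a homogeneous linear ODE. The two hypotheses doing the real work are $\Theta\in \mathcal{S}(t_0)$, which makes $A(t_0)=0$, and the matched initialization $\widehat{x}_i(t_0)=x(t_0)$, which makes $b(t_0)=0$; relaxing either one injects a nonzero initial condition and destroys the invariance.
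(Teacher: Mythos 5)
Your proof is correct and follows essentially the same route as the paper's: both arguments fix the barycentric weights expressing $\Theta$ as a convex combination of the initial vertices $\widehat{\psi}_i(t_0)$ and show that this combination is invariant under the adaptive update law. Your version is in fact more complete than the paper's, which simply asserts that $\sum_i\alpha_i\widehat{\psi}_i(t)$ remains equal to $\Theta$ for all $t>t_0$, whereas you justify the invariance explicitly by deriving the coupled homogeneous dynamics of $\bigl(A(t),b(t)\bigr)$ and invoking uniqueness of the identically-zero solution from the zero initial data supplied by the two hypotheses.
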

\begin{proof}
If $\Theta\in \mathcal{S}(t_0)$, then there exists $\{\alpha_1,\alpha_2,\cdots, \alpha_q\}\in \mathbb{R}^q$, such that 
\begin{equation}
\sum_{i=1}^q \alpha_i=1; \quad \sum_{i=1}^q \alpha_i \widehat{\psi}_i(t_0)=\Theta \nonumber
\end{equation}
\noindent Let the initial choice of $\widehat{\Theta}(t_0)$ be
\begin{equation}\label{eq:thathatnot equal to theta in proof}
\widehat{\Theta}(t_0)=\sum_{i=1}^q \alpha_i \widehat{\psi}_i(t_0)=\Theta 
\end{equation}
If $\widehat{x}_i(t_0)=x(t_0), \forall i\in \mathbb{I}^+_q$ and $\widehat{\psi}_i(t),\forall i\in \mathbb{I}^+_q$ are updated following \eqref{eq:new adaptive update law theta a} (equivalently $\widehat{\Theta}(t)$ is update following \eqref{eq:new adaptive update law theta c}), then the following holds true
\begin{equation}\label{eq:thathatt equal to theta in proof}
\widehat{\Theta}(t_0)=\Theta=\widehat{\Theta}(t)=\sum_{i=1}^q \alpha_i \theta_i(t), \forall t>t_0\ \Rightarrow \ \Theta \in \mathcal{S}(t)
\end{equation}
\noindent This concludes the proof.
\end{proof}
\begin{corollary}\label{cor:xtilde bound}
With the initial condition $\tilde{x}(t_0)=0$, the state estimation error $\tilde{x}(t)$ is bounded as follows
\begin{equation}
\|\tilde{x}(t)\| \leqslant  \sqrt{|\Gamma^{-1}|}\textnormal{diam}(\mathcal{S}(t_0))
\end{equation}
where $\Gamma>0$ is the rate of parameter adaptation, introduced in \eqref{eq:new adaptive update law theta c}
\end{corollary}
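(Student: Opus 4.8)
The plan is to reuse the Lyapunov machinery already established in the proof of Theorem~\ref{thm: Adaptive learning analysis} and convert the monotonicity of $V_{\tilde{x}}$ into an explicit bound. Recall that for each vertex index $i\in\mathbb{I}^+_q$ the proof of Theorem~\ref{thm: Adaptive learning analysis} works with $V_{\tilde{x}}(t)=\tfrac{1}{2}\tilde{x}_i(t)^T\tilde{x}_i(t)+\tfrac{1}{2}tr(\tilde{\psi}_i(t)^T\Gamma^{-1}\tilde{\psi}_i(t))$ and shows $\dot{V}_{\tilde{x}}(t)=-k\|\tilde{x}_i(t)\|^2\leqslant 0$. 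Hence $V_{\tilde{x}}$ is non-increasing, so $V_{\tilde{x}}(t)\leqslant V_{\tilde{x}}(t_0)$ for all $t\geqslant t_0$. First I would discard the nonnegative parameter-error term on the left to get $\tfrac{1}{2}\|\tilde{x}_i(t)\|^2\leqslant V_{\tilde{x}}(t_0)$, so the whole task reduces to bounding the initial value $V_{\tilde{x}}(t_0)$.

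Next I would evaluate $V_{\tilde{x}}(t_0)$. The initial condition $\tilde{x}(t_0)=0$ together with $\widehat{x}_i(t_0)=x(t_0)$ gives $\tilde{x}_i(t_0)=0$ for every $i$, so the state part of $V_{\tilde{x}}$ vanishes at $t_0$ and $V_{\tilde{x}}(t_0)=\tfrac{1}{2}tr(\tilde{\psi}_i(t_0)^T\Gamma^{-1}\tilde{\psi}_i(t_0))$. Since $\Gamma^{-1}$ is symmetric positive definite, I would bound this quadratic trace form by $tr(\tilde{\psi}_i(t_0)^T\Gamma^{-1}\tilde{\psi}_i(t_0))\leqslant |\Gamma^{-1}|\,\|\tilde{\psi}_i(t_0)\|_F^2$, where $|\Gamma^{-1}|$ denotes the induced matrix norm (the largest eigenvalue of $\Gamma^{-1}$, which collapses to $1/\Gamma$ when the adaptation rate is scalar). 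Finally, because $\widehat{\psi}_i(t_0)=\psi_i$ is a vertex of $\mathcal{S}(t_0)=\Psi$ and $\Theta\in\Psi=\mathcal{S}(t_0)$ by \eqref{eq:theta bound}, the error $\tilde{\psi}_i(t_0)=\Theta-\psi_i$ connects two points of $\mathcal{S}(t_0)$; by the definition of diameter this yields $\|\tilde{\psi}_i(t_0)\|_F\leqslant \textnormal{diam}(\mathcal{S}(t_0))$. Chaining these estimates gives $\|\tilde{x}_i(t)\|\leqslant \sqrt{|\Gamma^{-1}|}\,\textnormal{diam}(\mathcal{S}(t_0))$ uniformly in $t$ and $i$.

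To conclude for $\tilde{x}(t)$ itself, I would use the convex-combination identity $\tilde{x}(t)=\sum_{i=1}^q \gamma_i\tilde{x}_i(t)$ from \eqref{eq:new adaptive update law theta b} with $\gamma_i\geqslant 0$ and $\sum_i\gamma_i=1$, so that the triangle inequality gives $\|\tilde{x}(t)\|\leqslant \sum_i \gamma_i\|\tilde{x}_i(t)\|\leqslant \max_i\|\tilde{x}_i(t)\|$, and the per-vertex bound carries over unchanged. The step I would treat most carefully, and the only genuine obstacle, is the trace inequality: one must exploit the symmetric positive definiteness of $\Gamma^{-1}$ to write $tr(A^T\Gamma^{-1}A)\leqslant |\Gamma^{-1}|\,tr(A^TA)=|\Gamma^{-1}|\,\|A\|_F^2$ and to pin down precisely what $|\Gamma^{-1}|$ means. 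Everything else is a direct substitution of the initial conditions and a straightforward application of the diameter definition.
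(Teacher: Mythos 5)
Your proposal is correct and follows essentially the same route as the paper's own proof: monotonicity of $V_{\tilde{x}}$ from Theorem~\ref{thm: Adaptive learning analysis}, evaluation at $t_0$ using $\tilde{x}_i(t_0)=0$, the bound $\|\tilde{\psi}_i(t_0)\|_F\leqslant \textnormal{diam}(\mathcal{S}(t_0))$ via Lemma~\ref{lem:theta in S}, and the convex combination \eqref{eq:new adaptive update law theta b} to pass from $\tilde{x}_i$ to $\tilde{x}$. Your explicit treatment of the trace inequality $tr(A^T\Gamma^{-1}A)\leqslant |\Gamma^{-1}|\,\|A\|_F^2$ is a detail the paper leaves implicit, but it does not change the argument.
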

\begin{proof}
From the Lyapunov analysis in the proof of \textit{Theorem} \ref{thm: Adaptive learning analysis}, it is concluded that $V_{\tilde{x}}(t)$ is non-increasing and therefore, the following holds true
\begin{align}
& V_{\tilde{x}}(t)\leqslant V_{\tilde{x}}(t_0) \nonumber 
\end{align}
If the initial condition is $\tilde{x}(t_0)=0$ which in turn implies $\tilde{x}_i(t_0)=0, \forall i\in \mathbb{I}^+_q$ (from \eqref{eq:new adaptive update law theta b}), then the following hold true 
\begin{align}\label{eq:xtilde prebound}
& \|\tilde{x}_i(t)\|^2 <  \|\tilde{x}_i(t)\|^2+|\Gamma^{-1}|\|\tilde{\psi}_i(t)\|^2_F\nonumber \\
&\quad \quad \quad \ \leqslant  \|\tilde{x}_i(t_0)\|^2+|\Gamma^{-1}|\|\tilde{\psi}_i(t_0)\|^2_F=|\Gamma^{-1}|\|\tilde{\psi}_i(t_0)\|^2_F \nonumber \\
\Rightarrow & \|\tilde{x}_i(t)\| \leqslant \sqrt{|\Gamma^{-1}|} \|\tilde{\psi}_i(t_0)\|_F
\end{align} 
From \textit{Lemma} \ref{lem:theta in S}, it can be concluded that 
\begin{equation}\label{eq:psii bounds}
\|\tilde{\psi}_i(t_0)\|_F\leqslant \textnormal{diam}(\mathcal{S}(t_0))
\end{equation}
Therefore, utilizing \eqref{eq:new adaptive update law theta b}, \eqref{eq:xtilde prebound} and \eqref{eq:psii bounds} it is proved that the claimed assertions hold.
\end{proof}
\begin{corollary}\label{cor:x and xhat relation}
It is inferred from \textit{Theorem} \ref{thm: Adaptive learning analysis} and \textit{Corollary} \ref{cor:xtilde bound} that with the control input 
\begin{equation}\label{eq:ut and widehat ut relation}
u(t)=\widehat{u}(t)-\widehat{\theta}^u(t)k\tilde{x}(t)
\end{equation}
The states $x(t)$ and $\widehat{x}(t)$ satisfy the following
\begin{equation}
x(t)\in \widehat{x}(t)\oplus \mathcal{B}_{\delta}(0); \ \widehat{x}(t)\rightarrow x(t)\ \textnormal{as} \ t\rightarrow \infty
\end{equation}
where $\delta \triangleq \sqrt{|\Gamma^{-1}|} \textnormal{diam}(\mathcal{S}(t_0))$.
\end{corollary}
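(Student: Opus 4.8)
The plan is to derive both assertions as direct consequences of \textit{Theorem} \ref{thm: Adaptive learning analysis} and \textit{Corollary} \ref{cor:xtilde bound}, so the bulk of the work is merely to verify that the stated control input places the system in exactly the setting analyzed there. First I would observe that summing \eqref{eq:estimated dynamics vertices b} against the weights $\gamma_i$ and invoking \eqref{eq:estimated dynamics vertices c} gives $u(t)=\widehat{u}(t)+\tilde{u}(t)$; substituting the choice of $\tilde{u}(t)$ from \eqref{eq:tilde u equation} then reproduces the control law \eqref{eq:ut and widehat ut relation}. Consequently the closed-loop error dynamics \eqref{eq:xtilde i system closed-loop} and the Lyapunov analysis \eqref{eq:Ve dot derivative} apply verbatim, so the boundedness and asymptotic convergence of $\tilde{x}(t)$ from \textit{Theorem} \ref{thm: Adaptive learning analysis} together with the explicit bound of \textit{Corollary} \ref{cor:xtilde bound} are all in force.

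For the set-membership claim I would start from the definition $\tilde{x}(t)=x(t)-\widehat{x}(t)$ in \eqref{eq:new adaptive update law theta b}, rewritten as $x(t)=\widehat{x}(t)+\tilde{x}(t)$. \textit{Corollary} \ref{cor:xtilde bound} furnishes $\|\tilde{x}(t)\|\leqslant \sqrt{|\Gamma^{-1}|}\,\textnormal{diam}(\mathcal{S}(t_0))=\delta$, which by the definition of the compact ball means $\tilde{x}(t)\in \mathcal{B}_{\delta}(0)$. Appealing to the definition of the Minkowski sum $\oplus$ then yields $x(t)\in \widehat{x}(t)\oplus \mathcal{B}_{\delta}(0)$, establishing the first assertion.

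The asymptotic statement follows immediately: Claim $2$ of \textit{Theorem} \ref{thm: Adaptive learning analysis} gives $\tilde{x}(t)\rightarrow 0$ as $t\rightarrow \infty$, and since $\tilde{x}(t)=x(t)-\widehat{x}(t)$ this is precisely $\widehat{x}(t)\rightarrow x(t)$. The only point requiring genuine care — and the step I expect to be the main (if modest) obstacle — is confirming that the hypotheses of \textit{Corollary} \ref{cor:xtilde bound} are actually met, in particular the initial condition $\tilde{x}(t_0)=0$ that underlies the non-increasing Lyapunov bound there. This I would obtain from the initialization $\widehat{x}_i(t_0)=x(t_0)$ assumed in \textit{Lemma} \ref{lem:theta in S}, which forces $\tilde{x}_i(t_0)=0$ for every $i\in \mathbb{I}^+_q$ and hence $\tilde{x}(t_0)=0$ through \eqref{eq:new adaptive update law theta b}. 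Once this initialization is in place, no further computation is needed and the corollary drops out of the two preceding results.
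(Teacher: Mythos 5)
Your proof is correct and fills in exactly the chain of reasoning the paper intends (the paper states this corollary as an immediate inference from \textit{Theorem} \ref{thm: Adaptive learning analysis} and \textit{Corollary} \ref{cor:xtilde bound} with no written proof): identify $u(t)-\widehat{u}(t)$ with $\tilde{u}(t)$ from \eqref{eq:tilde u equation}, use $x(t)=\widehat{x}(t)+\tilde{x}(t)$ together with the bound $\|\tilde{x}(t)\|\leqslant\delta$ for the set membership, and the convergence $\tilde{x}(t)\rightarrow 0$ for the asymptotic claim, after checking the initialization $\tilde{x}(t_0)=0$. The only caveat is that substituting \eqref{eq:tilde u equation} actually yields $u(t)=\widehat{u}(t)-\widehat{\theta}^u(t)^{-1}k\tilde{x}(t)$, so the missing inverse in \eqref{eq:ut and widehat ut relation} is a typo in the paper's statement rather than something your derivation literally reproduces.
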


\begin{theorem}\label{thm:size of St}
If the vertices of $\mathcal{S}(t_0)$ is updated following \eqref{eq:new adaptive update law theta a}, then diam$(\mathcal{S}(t))\leqslant$ diam$(\mathcal{S}(t_0)), \forall t> t_0$.
\end{theorem}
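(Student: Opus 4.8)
The plan is to reduce the statement about the time-varying convex hull $\mathcal{S}(t)$ to a pairwise statement about its generating points. Since for any finite family of points the diameter of the convex hull equals the largest distance between two of the generating points, I would first record the identity $\textnormal{diam}(\mathcal{S}(t))=\max_{i,j\in \mathbb{I}^+_q}\|\widehat{\psi}_i(t)-\widehat{\psi}_j(t)\|_F$. It then suffices to prove that every pairwise vertex distance $\|\widehat{\psi}_i(t)-\widehat{\psi}_j(t)\|_F$ is non-increasing in $t$; taking the maximum over the finitely many pairs delivers the claim.

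To analyze a single pair, I would introduce the differences $e_{ij}(t)\triangleq \widehat{\psi}_i(t)-\widehat{\psi}_j(t)$ and $\xi_{ij}(t)\triangleq \widehat{x}_i(t)-\widehat{x}_j(t)$. Subtracting two copies of the vertex update law \eqref{eq:new adaptive update law theta a} gives $\dot{e}_{ij}(t)=\Gamma(\tilde{x}_i(t)-\tilde{x}_j(t))X(t)^T=-\Gamma\xi_{ij}(t)X(t)^T$, while subtracting two copies of the vertex dynamics \eqref{eq:estimated dynamics vertices a} under the control choice \eqref{eq:tilde u equation} gives $\dot{\xi}_{ij}(t)=e_{ij}(t)X(t)-k\xi_{ij}(t)$. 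These coupled equations are structurally identical to the error dynamics of $(\tilde{x}_i,\tilde{\psi}_i)$ used in the proof of \textit{Theorem} \ref{thm: Adaptive learning analysis}, with $\xi_{ij}$ playing the role of $\tilde{x}_i$ and $e_{ij}$ that of $\tilde{\psi}_i$, signs included.

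Exploiting this analogy, I would take the pairwise Lyapunov function $W_{ij}(t)=\tfrac{1}{2}\|\xi_{ij}(t)\|^2+\tfrac{1}{2}tr(e_{ij}(t)^T\Gamma^{-1}e_{ij}(t))$ and compute, exactly as in \eqref{eq:Ve dot derivative}, that the two cross terms $\xi_{ij}^T e_{ij}X$ and $tr(e_{ij}^T\xi_{ij}X^T)$ coincide and cancel, leaving $\dot{W}_{ij}(t)=-k\|\xi_{ij}(t)\|^2\leqslant 0$. The decisive initial condition is $\widehat{x}_i(t_0)=x(t_0)$ for all $i$ (as used in \textit{Lemma} \ref{lem:theta in S}), giving $\xi_{ij}(t_0)=0$ and hence $W_{ij}(t_0)=\tfrac{1}{2}tr(e_{ij}(t_0)^T\Gamma^{-1}e_{ij}(t_0))$. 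Since $\Gamma>0$ enters as a scalar adaptation rate (cf.\ \eqref{eq:new adaptive update law theta c} and \textit{Corollary} \ref{cor:xtilde bound}), $tr(e^T\Gamma^{-1}e)=\Gamma^{-1}\|e\|_F^2$, so dropping the nonnegative kinetic term yields $\tfrac{1}{2}\Gamma^{-1}\|e_{ij}(t)\|_F^2\leqslant W_{ij}(t)\leqslant W_{ij}(t_0)=\tfrac{1}{2}\Gamma^{-1}\|e_{ij}(t_0)\|_F^2$, i.e. $\|\widehat{\psi}_i(t)-\widehat{\psi}_j(t)\|_F\leqslant \|\widehat{\psi}_i(t_0)-\widehat{\psi}_j(t_0)\|_F$. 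Maximizing over $(i,j)$ closes the argument.

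The main obstacle is the reduction carried out in the first step, namely justifying that the diameter of the evolving hull is governed purely by pairwise vertex distances and that this stays valid even if some $\widehat{\psi}_i(t)$ ceases to be an extreme point of $\mathcal{S}(t)$ during adaptation. This is dispatched by the elementary fact that $\textnormal{diam}(co\{p_1,\dots,p_q\})=\max_{i,j}\|p_i-p_j\|$ holds for any finite family irrespective of which points are true vertices, so no bookkeeping of the active vertex set is needed. A secondary point worth flagging is the scalar nature of $\Gamma$: the clean Frobenius-norm monotonicity relies on $tr(e^T\Gamma^{-1}e)=\Gamma^{-1}\|e\|_F^2$, whereas for a matrix-valued $\Gamma$ the same Lyapunov computation only guarantees monotonicity of the $\Gamma^{-1}$-weighted norm, so one should confirm that $\Gamma$ is indeed a positive scalar as the surrounding notation suggests.
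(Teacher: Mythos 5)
Your argument is essentially identical to the paper's: the same pairwise differences $\widehat{\psi}_i-\widehat{\psi}_j$ and $\widehat{x}_i-\widehat{x}_j$, the same coupled error dynamics under \eqref{eq:tilde u equation}, and the same Lyapunov function $\tfrac{1}{2}\|\xi_{ij}\|^2+\tfrac{1}{2}tr(e_{ij}^T\Gamma^{-1}e_{ij})$ with derivative $-k\|\xi_{ij}\|^2$. If anything, your closing step is tidier than the paper's, since you explicitly use $\widehat{x}_i(t_0)=x(t_0)$ to kill the kinetic term and reduce the hull diameter to the maximum pairwise Frobenius distance of the generators, where the paper's final display loosely identifies $\textnormal{diam}(\mathcal{S}(\cdot))$ with the state differences $\Delta\widehat{x}_{(m,n)}$ rather than the parameter differences $\Delta\widehat{\psi}_{(m,n)}$.
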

\begin{proof}
Let the error terms $\Delta \widehat{x}_{(i,j)}(t)$ and $\Delta \widehat{\psi}_{(i,j)}(t)$ be defined for all $i,j\in \mathbb{I}^+_q, i\neq j$, as follows
\begin{subequations}\label{eq:differences in xi and psii}
\begin{align}
&\Delta \widehat{x}_{(i,j)}(t)\triangleq \widehat{x}_i(t)-\widehat{x}_j(t) \\
&\Delta \widehat{\psi}_{(i,j)}(t)\triangleq \widehat{\psi}_i(t)-\widehat{\psi}_j(t)
\end{align}
\end{subequations}
\noindent Utilizing \eqref{eq:estimated dynamics vertices a} and \eqref{eq:differences in xi and psii} the following is computed
\begin{equation}\label{eq:delta x psi dynamics}
\Delta \dot{\widehat{x}}_{(i,j)}(t)=\Delta \widehat{\psi}_{(i,j)}(t)X(t)-\tilde{\theta}^u(t)(\tilde{u}_i(t)-\tilde{u}_j(t))
\end{equation}
where $X(t)=[\phi(x(t))^T,u(t)^T]^T$. Utilizing \eqref{eq:tilde u equation}, \eqref{eq:delta x psi dynamics} is further modified as follows
\begin{equation}\label{eq:delta V req 1}
\Delta \dot{\widehat{x}}_{(i,j)}(t)=\Delta \widehat{\psi}_{(i,j)}(t)X(t)-k\Delta \widehat{x}_{(i,j)}(t)
\end{equation}
Utilizing \eqref{eq:new adaptive update law theta a} the following is obtained
\begin{equation}\label{eq:delta V req 2}
\Delta \dot{\widehat{\psi}}_{(i,j)}(t)=-\Gamma \Delta \widehat{x}_{(i,j)}(t)X(t)^T
\end{equation}
To analyse the evolution of $\mathcal{S}(t)$ for $t\geqslant t_0$, the following Lyapunov function candidate is considered
\begin{align}
V_{\Delta}(t)=& \frac{1}{2} \Delta \widehat{x}_{(i,j)}(t)^T\Delta \widehat{x}_{(i,j)}(t)\nonumber \\
&+\frac{1}{2}tr\Big (\Delta \widehat{\psi}_{(i,j)}(t)^T\Gamma^{-1}\Delta \widehat{\psi}_{(i,j)}(t)\Big )
\end{align}
The derivative of $V_{\Delta}$ is computed as follows
\begin{align}
\dot{V}_{\Delta}(t)\stackrel{\eqref{eq:delta V req 1},\eqref{eq:delta V req 2}}{=}& \Delta \widehat{x}_{(i,j)}(t)^T(\Delta \widehat{\psi}_{(i,j)}(t)X(t)-k\Delta \widehat{x}_{(i,j)}(t)) \nonumber \\
&-tr(\Delta \widehat{\psi}_{(i,j)}(t)^T\Delta \widehat{x}_{(i,j)}(t)X(t)^T) \nonumber \\
=& -k\Delta \widehat{x}_{(i,j)}(t)^T\Delta \widehat{x}_{(i,j)}(t) \label{eq:Vdelta dot}
\end{align}
Therefore, ${V}_{\Delta}(t)$ is non-increasing (since $\dot{V}_{\Delta}(t)$ is negative semi-definite).\par
Let diam$(\mathcal{S}(t_0))=\Delta \widehat{x}_{(i,j)}(t_0)$ for some $i,j\in \mathbb{I}^+_q, i\neq j$. Let diam$(\mathcal{S}(t))=\Delta \widehat{x}_{(m,n)}(t)$ for any arbitrary time instant $t$ such that $t> t_0$. The indices $m,n$ can take different values other than $(i,j)$ satisfying $m,n\in  \mathbb{I}^+_q, m\neq n$. Since ${V}_{\Delta}(t)$ is non-increasing, the following is concluded
\begin{align}
\textnormal{diam}(\mathcal{S}(t))=&\Delta \widehat{x}_{(m,n)}(t)\leqslant \Delta \widehat{x}_{(m,n)}(t_0)\nonumber \\
 \leqslant &\Delta \widehat{x}_{(i,j)}(t_0)=\textnormal{diam}(\mathcal{S}(t_0))
\end{align}
This concludes the proof.
\end{proof}
\subsection{Relationship between the nominal and the estimated dynamics}
Let the error between the states of the estimated system \eqref{eq:estimated dynamics} and the nominal system \eqref{eq:nominal theta parameterized dynamics} be defined as 
\begin{equation}\label{eq:ebarx definition}
\tilde{\bar{x}}_j(t)\triangleq \widehat{x}(t)-\overline{x}_j(t), j\in \mathbb{I}_N 
\end{equation}
\noindent and the error between the estimated parameter $\widehat{\Theta}(t)$ and nominal parameter $\overline{\Theta}_i, i \in \mathbb{I}_N$ is defined as 
\begin{equation}
\tilde{\bar{\Theta}}_j(t)\triangleq \widehat{\Theta}(t)-\overline{\Theta}_j, j\in \mathbb{I}_N \nonumber
\end{equation}
\noindent The dynamics associated with the error are defined as follows
\begin{align}\label{eq:widehat overline error dynamics}
\dot{\tilde{\bar{x}}}_j(t)=&\dot{\widehat{x}}(t)-\dot{\overline{x}}_j(t) \nonumber \\
=&\widehat{\theta}^x(t)\phi(x(t)) - \overline{\theta}^x_j\phi(\overline{x}_j(t))+\widehat{\theta}^u(t)\widehat{u}(t)- \overline{\theta}^u_j\overline{u}_j(t) \nonumber\\
=&\underbrace{\widehat{\theta}^x(t)\phi(x(t)) - \overline{\theta}^x_j\phi(\overline{x}_j(t))+(\widehat{\theta}^u(t)-\overline{\theta}^u_j)\overline{u}_j(t)}_{h(x(t),\overline{x}_j(t),\overline{u}_j(t),\widehat{\Theta}(t))}\nonumber \\
&+\widehat{\theta}^u(t)\underbrace{(\widehat{u}(t)- \overline{u}_j(t))}_{\tilde{\bar{u}}_j(t)} \nonumber\\
=&h(x(t),\overline{x}_j(t),\overline{u}_j(t),\widehat{\Theta}(t))+\widehat{\theta}^u(t)\tilde{\bar{u}}_j(t)
\end{align}
\noindent The variable $h(x(t),\overline{x}(t),t)$ is measurable and therefore, if the control input $\tilde{\bar{u}}_j(t)$ is chosen as 
\begin{equation}\label{eq:ubartilde}
\tilde{\bar{u}}_j(t)=-\widehat{\theta}^{u}(t)^{-1}(h(x(t),\overline{x}_j(t),\overline{u}_j(t),\widehat{\Theta}(t))+k\tilde{\bar{x}}_j(t))
\end{equation}
\noindent $k$ is the control gain as used in the control input in \eqref{eq:tilde u equation}, then the closed-loop dynamics associated with $\tilde{\bar{x}}_j(t)$ is derived from \eqref{eq:widehat overline error dynamics} as
\begin{equation}\label{eq:xbartilde dynamics}
\dot{\tilde{\bar{x}}}_j(t)=-k\tilde{\bar{x}}_j(t)
\end{equation}
\noindent which is asymptotically stable.
\begin{corollary}\label{cor:xhat around xbar}
It is inferred from \eqref{eq:ubartilde} and \eqref{eq:xbartilde dynamics} that if 
\begin{align}\label{eq:uhat ubar relation}
&\widehat{u}(t)=\overline{u}_j(t)-\widehat{\theta}^{u}(t)^{-1}\Big (h(x(t),\overline{x}_j(t),\overline{u}_j(t),\widehat{\Theta}(t))+k\tilde{\bar{x}}_j(t)\Big )
\end{align}
then $\widehat{x}(t)$ tracks $\overline{x}_j(t)$ asymptotically.
\end{corollary}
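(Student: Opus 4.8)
The plan is to observe that this corollary is an immediate consequence of the error-dynamics derivation already carried out in \eqref{eq:widehat overline error dynamics}--\eqref{eq:xbartilde dynamics}, so that the proof reduces to verifying that the proposed control input \eqref{eq:uhat ubar relation} is precisely the one that induces the closed-loop error dynamics \eqref{eq:xbartilde dynamics}. First I would recall from the derivation of \eqref{eq:widehat overline error dynamics} that the tracking-error input is defined as $\tilde{\bar{u}}_j(t)=\widehat{u}(t)-\overline{u}_j(t)$, so that $\widehat{u}(t)=\overline{u}_j(t)+\tilde{\bar{u}}_j(t)$. Substituting the stabilizing choice \eqref{eq:ubartilde} for $\tilde{\bar{u}}_j(t)$ into this identity yields exactly the expression \eqref{eq:uhat ubar relation}; hence selecting $\widehat{u}(t)$ according to \eqref{eq:uhat ubar relation} is equivalent to selecting $\tilde{\bar{u}}_j(t)$ according to \eqref{eq:ubartilde}.

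With that equivalence established, the second step is to invoke the already-derived closed-loop error dynamics \eqref{eq:xbartilde dynamics}, namely $\dot{\tilde{\bar{x}}}_j(t)=-k\tilde{\bar{x}}_j(t)$, which holds precisely because substituting \eqref{eq:ubartilde} into \eqref{eq:widehat overline error dynamics} cancels the measurable term $h(\cdot)$ and leaves the linear homogeneous system. Since $k>0$ is the fixed, positive control gain introduced in \eqref{eq:tilde u equation}, this system is exponentially stable, and integrating gives $\tilde{\bar{x}}_j(t)=e^{-k(t-t_0)}\tilde{\bar{x}}_j(t_0)$, so that $\tilde{\bar{x}}_j(t)\to 0$ as $t\to\infty$. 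Recalling the definition $\tilde{\bar{x}}_j(t)\triangleq \widehat{x}(t)-\overline{x}_j(t)$ from \eqref{eq:ebarx definition}, this is exactly the assertion that $\widehat{x}(t)$ tracks $\overline{x}_j(t)$ asymptotically.

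There is essentially no analytical obstacle here, as the substantive work---the construction of $h(\cdot)$, the feedback-cancellation argument, and the stability of the resulting dynamics---was completed in the passage leading up to \eqref{eq:xbartilde dynamics}. The only points requiring care are the algebraic bookkeeping that confirms \eqref{eq:uhat ubar relation} and \eqref{eq:ubartilde} describe the same control law under the change of variable $\widehat{u}(t)=\overline{u}_j(t)+\tilde{\bar{u}}_j(t)$, and the implicit standing requirement---used throughout the paper wherever $\widehat{\theta}^u(t)^{-1}$ appears, and rooted in the invertibility of the vertices $\theta^u_i$ assumed in Assumption~1---that $\widehat{\theta}^u(t)$ be invertible so that \eqref{eq:ubartilde} is well defined.
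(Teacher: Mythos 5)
Your proposal is correct and follows exactly the route the paper intends: the paper offers no separate proof for this corollary, relying precisely on the observation that \eqref{eq:uhat ubar relation} is \eqref{eq:ubartilde} rewritten via $\widehat{u}(t)=\overline{u}_j(t)+\tilde{\bar{u}}_j(t)$, so that the exponentially stable closed loop \eqref{eq:xbartilde dynamics} drives $\tilde{\bar{x}}_j(t)=\widehat{x}(t)-\overline{x}_j(t)$ to zero. Your additional remarks on the invertibility of $\widehat{\theta}^u(t)$ are a sensible, if implicit, bookkeeping point and do not change the argument.
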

The control inputs $\widehat{u}(t)$ is obtained from the convex combination of the control input $\widehat{u}^i(t), i\in \mathbb{I}^+_q$ as defined in \eqref{eq:estimated dynamics vertices c}. Therefore, following \eqref{eq:uhat ubar relation}, the control inputs $\widehat{u}_i(t)$ are designed as follows
\begin{subequations}\label{eq:uibar uhat relation overall} 
\begin{align}
&\widehat{u}_i(t)=\overline{u}_j(t)-\widehat{\theta}^{u}(t)^{-1}(h(x(t),\overline{x}_j(t),\overline{u}_j(t),\widehat{\Theta}_i(t))\nonumber \\
&+k\tilde{\bar{x}}_{(i,j)}(t)), j\in \mathbb{I}_N, \forall i \in \mathbb{I}^+_q \label{eq:uibar uhat relation} \\
& h(x(t),\overline{x}_j(t),\overline{u}_j(t),\widehat{\Theta}_i(t))=\widehat{\theta}_i^x(t)\phi(x(t)) - \overline{\theta}^x_j\phi(\overline{x}_j(t))\nonumber \\
&+(\widehat{\theta}_i^u(t)-\overline{\theta}^u_j)\overline{u}_j(t) \\
& \tilde{\bar{x}}_{(i,j)}(t)=\widehat{x}_i(t)-\overline{x}_j(t)
\end{align} 
\end{subequations}
It can be verified that $\widehat{u}_i(t)$ in \eqref{eq:uibar uhat relation} and $\widehat{u}(t)$ in \eqref{eq:uhat ubar relation} satisfies \eqref{eq:estimated dynamics vertices c}.
\subsection{Relationship between the nominal dynamics and the uncertain system}
Let the error between the state $x(t)$ of the uncertain plant and the nominal state $\overline{x}_j(t), j\in \mathbb{I}_N$ be defined as 
\begin{equation}\label{eq: nominal and uncertain state error}
\tilde{\bar{x}}_{n_j}(t)=x(t)-\overline{x}_j(t)
\end{equation} 
\begin{lem}\label{lem:x and xbar ralation}
If $\widehat{x}(t_0)$ is initialized as $\overline{x}_j(t_0)$ and the control input $u(t)$ is chosen as 
\begin{align}
& u(t)=\overline{u}_j(t)\hspace{-0.02in}\nonumber \\
&-\widehat{\theta}^{u}(t)^{-1}\Big (h(x(t),\overline{x}_j(t),\overline{u}_j(t),\widehat{\Theta}(t))+k(x(t)-\bar{x}_j(t))\Big )\nonumber
\end{align}
Then the following holds true
\begin{align}
\bar{\tilde{x}}_{n_j}(t) \in \mathcal{B}_{\delta}(0); \quad \bar{\tilde{x}}_{n_j}(t)\rightarrow 0\ \textnormal{as} \ t\rightarrow \infty
\end{align}
\end{lem}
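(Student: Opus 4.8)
The plan is to avoid a fresh Lyapunov analysis of $\tilde{\bar{x}}_{n_j}$ and instead route the plant-to-nominal error through the estimated state, reducing the claim to an assembly of results already established. Introducing $\tilde{x}(t)\triangleq x(t)-\widehat{x}(t)$ and $\tilde{\bar{x}}_j(t)\triangleq \widehat{x}(t)-\overline{x}_j(t)$, I would write $\tilde{\bar{x}}_{n_j}(t)=x(t)-\overline{x}_j(t)=\tilde{x}(t)+\tilde{\bar{x}}_j(t)$. The first term is controlled by \textit{Theorem} \ref{thm: Adaptive learning analysis}, \textit{Corollary} \ref{cor:xtilde bound} and \textit{Corollary} \ref{cor:x and xhat relation}; the second obeys the linear dynamics \eqref{eq:xbartilde dynamics}. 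The whole argument then hinges on showing that the single input $u(t)$ prescribed in the lemma is exactly the input that simultaneously enforces the feedback of \textit{Corollary} \ref{cor:x and xhat relation} and the nominal-tracking law of \textit{Corollary} \ref{cor:xhat around xbar}.

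The key algebraic step is to verify that the lemma's $u(t)$ equals $\widehat{u}(t)-\widehat{\theta}^u(t)^{-1}k\tilde{x}(t)$ --- the feedback of \textit{Corollary} \ref{cor:x and xhat relation} --- with $\widehat{u}(t)$ taken as the nominal-tracking input \eqref{eq:uhat ubar relation} of \textit{Corollary} \ref{cor:xhat around xbar}. Substituting the latter and merging the two proportional terms through $k\tilde{\bar{x}}_j(t)+k\tilde{x}(t)=k\big(\widehat{x}(t)-\overline{x}_j(t)\big)+k\big(x(t)-\widehat{x}(t)\big)=k\big(x(t)-\overline{x}_j(t)\big)$ recovers precisely the input written in the statement. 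Hence the plant closed-loop simultaneously drives $\tilde{x}(t)$ with the dynamics $\dot{\tilde{x}}(t)=\tilde{\Theta}(t)X(t)-k\tilde{x}(t)$ analysed in \textit{Theorem} \ref{thm: Adaptive learning analysis}, and drives $\tilde{\bar{x}}_j(t)$ through \eqref{eq:xbartilde dynamics}, so no new stability estimate is required.

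Next I would invoke the initializations. By \eqref{eq:nominal theta parameterized dynamics} the nominal state satisfies $\overline{x}_j(t_0)=x(t_0)$, and the lemma assumes $\widehat{x}(t_0)=\overline{x}_j(t_0)$; therefore $\tilde{x}(t_0)=0$ and $\tilde{\bar{x}}_j(t_0)=0$ hold together. With $\tilde{\bar{x}}_j(t_0)=0$ the linear system $\dot{\tilde{\bar{x}}}_j(t)=-k\tilde{\bar{x}}_j(t)$ of \eqref{eq:xbartilde dynamics} admits the unique solution $\tilde{\bar{x}}_j(t)\equiv 0$ for all $t\geqslant t_0$, so $\widehat{x}(t)=\overline{x}_j(t)$ throughout. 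With $\tilde{x}(t_0)=0$ and the feedback of \textit{Corollary} \ref{cor:x and xhat relation} in force, \textit{Corollary} \ref{cor:xtilde bound} gives $\|\tilde{x}(t)\|\leqslant \sqrt{|\Gamma^{-1}|}\,\textnormal{diam}(\mathcal{S}(t_0))=\delta$, i.e. $\tilde{x}(t)\in\mathcal{B}_{\delta}(0)$, while \textit{Theorem} \ref{thm: Adaptive learning analysis} gives $\tilde{x}(t)\to 0$. Combining, $\tilde{\bar{x}}_{n_j}(t)=\tilde{x}(t)+\tilde{\bar{x}}_j(t)=\tilde{x}(t)$ inherits both the ball membership and the convergence, which is exactly the assertion.

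Since the stability content is imported wholesale from the preceding corollaries, the only genuine work is the bookkeeping in the composition: establishing the control-input identity above and checking that the common initialization forces both $\tilde{x}(t_0)=0$ (needed by \textit{Corollary} \ref{cor:xtilde bound} for the $\delta$-bound) and $\tilde{\bar{x}}_j(t_0)=0$. I expect this to be the main point to get right, because a direct treatment of $\tilde{\bar{x}}_{n_j}$ applies the input under the true $\theta^u$ and would surface an awkward $\theta^u\widehat{\theta}^u(t)^{-1}$ coefficient on $\tilde{x}(t)$; writing the feedback through $\tilde{u}(t)=-\widehat{\theta}^u(t)^{-1}k\tilde{x}(t)$, so that $\widehat{\theta}^u(t)\tilde{u}(t)=-k\tilde{x}(t)$, is precisely what keeps the mismatch inside the bounded $\tilde{\Theta}(t)X(t)$ term and yields the clean dynamics above.
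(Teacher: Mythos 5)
Your proposal is correct and follows essentially the same route as the paper: decompose $\tilde{\bar{x}}_{n_j}(t)=\tilde{x}(t)+\tilde{\bar{x}}_j(t)$, show $\tilde{\bar{x}}_j(t)\equiv 0$ from \eqref{eq:xbartilde dynamics} under the stated initialization, verify that the lemma's $u(t)$ is exactly the composition of \eqref{eq:tilde u equation} and \eqref{eq:ubartilde} via $k\tilde{\bar{x}}_j(t)+k\tilde{x}(t)=k(x(t)-\overline{x}_j(t))$, and import the $\delta$-bound and convergence of $\tilde{x}(t)$ from \textit{Theorem}~\ref{thm: Adaptive learning analysis} and \textit{Corollaries}~\ref{cor:xtilde bound}--\ref{cor:x and xhat relation}. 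Your version is, if anything, slightly more careful about the initialization bookkeeping (and correctly writes the feedback with $\widehat{\theta}^u(t)^{-1}$, consistent with \eqref{eq:tilde u equation}).
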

\begin{proof}
Using \eqref{eq:new adaptive update law theta b} and \eqref{eq:ebarx definition}, the error $\tilde{\bar{x}}_{n_j}(t)$ can represented as
\begin{equation}\label{eq: relation between es}
\tilde{\bar{x}}_{n_j}(t)=\tilde{x}(t)+\tilde{\bar{x}}_j(t)
\end{equation}
It is concluded from \eqref{eq:xbartilde dynamics} that with the input $\tilde{\bar{u}}_j(t)$ in \eqref{eq:ubartilde}, $\tilde{\bar{x}}_j(t)=0,\forall t\in \mathbb{R}$ if $\tilde{\bar{x}}_j(t_0)=0$, which implies $\widehat{x}(t)=\overline{x}_j(t)$ if $\widehat{x}(t_0)=\overline{x}_j(t_0)$. It is further concluded that with the input $\tilde{u}(t)$ in \eqref{eq:tilde u equation}, $x(t)\in \widehat{x}(t)\oplus \mathcal{B}_{\delta}(0)$. Further utilizing \eqref{eq:tilde u equation} and \eqref{eq:ubartilde} (or \eqref{eq:ut and widehat ut relation} and \eqref{eq:uhat ubar relation}), the relationship between $u(t)$ and $\overline{u}_j(t)$ is established as follows
\begin{align}
& u(t)=\overline{u}_j(t)\hspace{-0.02in}\nonumber \\
&-\hspace{-0.02in}\widehat{\theta}^{u}(t)^{-1}\hspace{-0.02in}\Big (\hspace{-0.03in}h(x(t),\overline{x}_j(t),\overline{u}_j(t),\widehat{\Theta}(t))\hspace{-0.03in}+\hspace{-0.03in}k\tilde{\bar{x}}_j(t)\hspace{-0.03in}+\hspace{-0.03in}k\tilde{x}(t)\hspace{-0.03in}\Big )\nonumber \\
&=  \overline{u}_j(t)\nonumber \\
&\underbrace{-\widehat{\theta}^{u}(t)^{-1}\Big (h(x(t),\overline{x}_j(t),\overline{u}_j(t),\widehat{\Theta}(t))+k(x(t)-\bar{x}_j(t))\Big )}_{\nu(x(t),\overline{x}_j(t),\overline{u}_j(t),\widehat{\Theta}(t))}\nonumber
\end{align}
\begin{align}
&\Rightarrow u(t)=  \overline{u}_j(t)+\nu(x(t),\overline{x}_j(t),\overline{u}_j(t),\widehat{\Theta}(t))\label{eq:ubar and u relation}
\end{align}
Therefore, with $u(t)$ chosen following \eqref{eq:ubar and u relation} and $\widehat{x}(t_0)$ initialized as $\widehat{x}(t_0)=\overline{x}_j(t_0)$, the following holds using \eqref{eq: relation between es}, \textit{Corollary} \ref{cor:x and xhat relation}, and \textit{Corollary} \ref{cor:xhat around xbar}
\begin{equation}
\bar{\tilde{x}}_{n_j}(t) \in \mathcal{B}_{\delta}(0); \quad \bar{\tilde{x}}_{n_j}(t)\rightarrow 0\ \textnormal{as} \ t\rightarrow \infty
\end{equation}
\noindent where $\mathcal{B}_{\delta}(0)$ is defined in \textit{Corollary} \ref{cor:x and xhat relation}. This further implies 
\begin{equation}\label{eq:x and xhat bounded region}
x(t) \in \bar{x}_j(t)\oplus \mathcal{B}_{\delta}(0); \quad x(t) \rightarrow \bar{x}_j(t)\ \textnormal{as} \ t\rightarrow \infty
\end{equation}
This concludes the proof.
\end{proof}
\subsection{Reformulated COCP for motion-planning}\label{sec:RCMP}
The COCP \eqref{eq:path planning COCP} is reformulated to include implementable constraints which guarantee constraint satisfaction for the trajectories associated with the uncertain system \eqref{eq:nonlinear uncertain dynamics}. Let $t_0$ be the current time instant, let the updated adaptive model set obtained after the latest run of the motion planner be $\mathcal{S}(t_0)=$co$\{\widehat{\psi}_1(t_0),\widehat{\psi}_2(t_0),\cdots,\widehat{\psi}_q(t_0)\}$. Let the estimated parameter for the next run of the planner is chosen as $\widehat{\Theta}(t_0)=\sum_{i=0}^q \gamma_i \widehat{\psi}_i(t_0)$ , such that $\sum_{i=0}^q \gamma_i=1$. The nominal parameter $\overline{\Theta}_i, i\in \mathbb{I}_N$ is chosen before the current run of the motion planner, as follows
\begin{subequations}\label{eq:overline theta in S}
\begin{align}
&\overline{\Theta} = \argmin_{\overline{\Theta}_i}\Big (\|\overline{\Theta}_i-\widehat{\Theta}(t_0)\| \Big) \\
 \textnormal{s.t.}\quad  & \overline{\Theta}_i \in \mathcal{S}(t_0)\cap \Psi_d, \forall i\in  \mathbb{I}^+_N 
\end{align} 
\end{subequations}
This guarantees that the nominal parameter to be utilized in the motion planner lies within the current model set $\mathcal{S}(t_0)$. The motion-planning COCP is now based on the nominal system dynamics related to $\overline{\Theta}\triangleq [\overline{\theta}^x\ \overline{\theta}^u]$ computed in \eqref{eq:overline theta in S} and is formulated as follows
\begin{subequations}\label{eq:path planning COCP reformulated dp}
\begin{align}
\min_{\overline{x}(t),\overline{u}(t),T_f}& \ \ H=\int_{0}^{T_f}l(\overline{x}(t),\overline{u}(t))dt \nonumber \\
& \dot{\overline{x}}(t)=\overline{\theta}^x\phi(\overline{x}(t))+\overline{\theta}^u\overline{u}(t)\label{eq:dynamic constraint nominal dp} \\
& \overline{x}(t)\in \overline{\mathcal{X}}_{\mathcal{S}};\ \overline{u}(t)\in \mathcal{U}_{\mathcal{S}}  \label{eq:tightened constraints dp}\\
& \overline{x}(t_0)=x(t_0)=x_{ini}\in  \overline{\mathcal{X}}_{\mathcal{S}}\\
& \overline{x}(T_f)=x_f\in \overline{\mathcal{X}}_{\mathcal{S}}
\end{align}
\end{subequations}
where $\overline{\mathcal{X}}_{\mathcal{S}}$ and ${\mathcal{U}}_{\mathcal{S}}$ denote the tightened state and input constraints, respectively, defined as follows
\begin{subequations}\label{eq:tightened constraints def dp}
\begin{align}
&\overline{\mathcal{X}}_{\mathcal{S}} \triangleq \overline{\mathcal{X}}\ominus \mathcal{B}_{\delta}(t_0); \  \mathcal{U}_{\mathcal{S}} \triangleq \mathcal{U}\ominus \nu(\mathcal{X},\mathcal{X}_{\mathcal{S}},\mathcal{U},\mathcal{S}(t_0));\\
& \mathcal{X}_{\mathcal{S}} \triangleq \mathcal{X}\ominus \mathcal{B}_{\delta}(t_0)
\end{align}
\end{subequations} 
where $\nu(\mathcal{X},\mathcal{X}_{\mathcal{S}},\mathcal{B}_{\delta}(t_0))$ is a set computed as follows
\begin{align}
& \nu(\mathcal{X},\mathcal{X}_{\mathcal{S}},\mathcal{U},\mathcal{S}(t_0))\triangleq \nonumber \\
& \{\nu(x,\overline{x},\overline{u},\widehat{\Theta})\ | \forall (x,\overline{x},\overline{u},\widehat{\Theta}) \in \mathcal{X}\times \mathcal{X}_{\mathcal{S}} \times \mathcal{U}\times \mathcal{S}(t_0)\} \nonumber
\end{align}
\noindent The COCP \eqref{eq:path planning COCP reformulated dp} generates a feasible trajectory for the nominal system \eqref{eq:dynamic constraint nominal dp}, such that it travels from the initial position $x_{ini}$ to the desired final position $x_f$, while satisfying the tightened constraints \eqref{eq:tightened constraints dp}. Since $x(t_0)=\overline{x}(t_0)=x_{ini}$, the following is inferred from \eqref{eq:tightened constraints dp} using \eqref{eq:tightened constraints def dp} and \textit{Lemma} \ref{lem:x and xbar ralation}
\begin{align}
& x(t)\in \overline{x}(t)\oplus \mathcal{B}_{\delta}(t_0)\subset \overline{\mathcal{X}};\quad x(T_f)\in x_f\oplus \mathcal{B}_{\delta}(t_0)\subset \overline{\mathcal{X}} \nonumber\\
& u(t)=\overline{u}(t)+\nu(x,\overline{x},\overline{u},\widehat{\Theta})\in \overline{u}(t)\oplus \nu(\mathcal{X},\mathcal{X}_{\mathcal{S}},\mathcal{U},\mathcal{S}(t_0))\subset \mathcal{U}\nonumber  
\end{align}
\subsection{Lattice-based motion planner with robust constraint satisfaction}
The lattice-based motion-planning strategy converts a motion-planning COCP into a discrete graph-search problem by limiting the controls to a discrete subset of available actions, represented using a set of motion primitives. At the outset, the obstacle-free feasible state space is discretized as per a desired discretization. The discretized state space consists of all reachable states, that form the graph. Subsequently, motion primitives are computed, which are feasible state and control trajectories of the concerned system connecting one reachable state with another in the discretized state space.\par
To develop a lattice-based planner for the motion planning problem in \eqref{eq:path planning COCP reformulated dp}, discretized state space $\mathcal{X}_{d}$ is obtained from the obstacle-free tightened feasible state space $\mathcal{X}_{\mathcal{S}}$, defined as
\begin{equation}
\mathcal{X}_{\mathcal{S}} \triangleq \mathcal{X}\ominus \mathcal{B}_{\delta}(t_0) \nonumber
\end{equation} 
The set of motion primitives $\mathcal{M}_p$, associated with the nominal systems based on all the nominal parameters $\overline{\Theta}_i, \forall i\in \mathbb{I}_N$, is then constructed and a motion primitive $m\in \mathcal{M}$ is defined as follows
\begin{equation}\label{eq:mp def}
m=(\overline{x}_i(t),\overline{u}_i(t),\overline{\Theta}_i)\in \mathcal{X}_{\mathcal{S}}\times \mathcal{U}_{\mathcal{S}}\times \Psi_{d}(N),\quad t\in [0,T]
\end{equation}
The motion primitives associated with each $\overline{\Theta}_i, i\in \mathbb{I}_N$ are computed as follows: let $\overline{x}_{k},\overline{x}_{k+1}\in \mathcal{X}_{d}$ be any two neighbouring points in the discretized state space. Then the motion primitives $m_k$ are computed for each $\overline{x}_{k}\in \mathcal{X}_{d}$ and their corresponding neighbourhood points $\overline{x}_{k+1}\in \mathcal{X}_{d}$ by solving the following COCP
\vspace{-0.0in}
\begin{subequations}\label{eq:motion primitive OCP}
\begin{align}
\min_{\overline{x}_i(t), \overline{u}_i(t),T}& \ \ H=\int_{t_0}^{T}l(\overline{x}_i(t),\overline{u}_i(t))dt \nonumber \\
& \overline{x}_i(t_0)=\overline{x}_{k}; \quad \overline{x}_i(T)=\overline{x}_{k+1}\\
& \dot{\overline{x}}_i(t)=\overline{\theta}^x_i\phi(\overline{x}_i(t))+\overline{\theta}^u_i\overline{u}_i(t)\label{eq:motion primitive nominal dp} \\
& \overline{x}_i(t)\in \mathcal{X}_{\mathcal{S}};\ \overline{u}_i(t)\in \mathcal{U}_{\mathcal{S}}  \label{eq:motion primitive tightened constraints dp}
\end{align}
\end{subequations}
\noindent The solution of the COCP \eqref{eq:motion primitive OCP} constitutes a motion primitive $m_p$ (defined in \eqref{eq:mp def}), in which the motion is governed by the following 
\begin{align}\label{eq:state trasitin dynamics dp}
\overline{x}_{k+1}=& f_m(\overline{x}_k,\overline{\Theta}_i,m_k)\triangleq  \overline{x}_k+\int_0^T\hspace{-0.in}(\overline{\theta}^x_i\phi(\overline{x}_i(t))+\overline{\theta}^u_i\overline{u}_i(t))d{t}
\end{align} 
\noindent The motion-planning COCP \eqref{eq:path planning COCP reformulated dp} can now be approximated by the following graph-search problem, posed as a discrete COCP, which is solved online to guarantee a collision free motion
\begin{subequations}\label{eq:DPLP}
\begin{align}
\min_{\{m_k\}_{k=0}^{M-1},M}& \ \ H_m=\sum_{k=0}^{M-1}l_m(\overline{x}_k,m_k) \nonumber \\
& \overline{x}_0=x_{ini}; \quad \overline{x}_M=x_f\in \overline{\mathcal{X}}_{\mathcal{S}} \\
& \overline{x}_{k+1}=f_m(\overline{x}_k,\overline{\Theta},m_k)\label{eq:LP state transition dp} \\
& m_k\in \mathcal{M}(\overline{\Theta}) \label{eq:LP motion primitive dp}\\
& c(\overline{x}_k,m_k)\in \overline{\mathcal{X}}_{\mathcal{S}} \label{eq:LP feasibility constraint dp}
\end{align}
\end{subequations}
where $\mathcal{M}(\overline{\Theta})$ contains the motion primitives associated with $\overline{\Theta}$, which is selected utilizing \eqref{eq:overline theta in S}.\par 
The following proposition proves that the uncertain system states always remain within a region $\mathcal{B}_{\delta}(t_0)$ around the planned motion, obtained from \eqref{eq:DPLP} by combining a finite number of pre-computed motion primitives.
\begin{propos}
Let $x_k=x(kT)$. If $x_0=x(t_0)=x_{ini}$, then the state $x(t)$ of the uncertain system  \eqref{eq:nonlinear uncertain dynamics} satisfies the following for all $k\in \mathbb{I}_{[1:M]}$
\begin{align}
& x_k\in \overline{x}_k\oplus \mathcal{B}_{\delta}(t_0)\nonumber \\
& x({t})=\overline{x}({t})\oplus \mathcal{B}_{\delta}(t_0), \forall {t} \in [(kT,(k+1)T]\nonumber
\end{align}
with the control input $u({t})=\overline{u}({t})+\nu(x(t),\overline{x}(t),e_x(t)), \forall {t} \in [kT,(k+1)T]$ , where the utilized nominal control $\overline{u}({t})$ is encoded in $m_k,\forall k\in \mathbb{I}_{[0:M-1]}$.
\end{propos}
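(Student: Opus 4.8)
The plan is to prove the statement by induction over the motion-primitive index $k\in\mathbb{I}_{[0:M-1]}$, treating the concatenated nominal trajectory $\overline{x}(t)$ returned by the graph-search problem \eqref{eq:DPLP} as a single reference signal and reducing each segment $[kT,(k+1)T]$ to the single-primitive situation already settled in \textit{Lemma} \ref{lem:x and xbar ralation}. The engine of the argument is the error decomposition used there,
\begin{equation}
x(t)-\overline{x}(t)=\tilde{x}(t)+\tilde{\bar{x}}(t), \nonumber
\end{equation}
where $\tilde{x}(t)=x(t)-\widehat{x}(t)$ is the state-estimation error and $\tilde{\bar{x}}(t)=\widehat{x}(t)-\overline{x}(t)$ is the error between the estimated state and the nominal reference encoded in the active primitive $m_k$.

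First I would treat the two channels separately. For $\tilde{x}(t)$, I would invoke \textit{Theorem} \ref{thm: Adaptive learning analysis} and \textit{Corollary} \ref{cor:xtilde bound}: since the projection-based adaptive law \eqref{eq:new adaptive update law theta} runs continuously from $t_0$ and the initialization yields $\tilde{x}(t_0)=x(t_0)-\widehat{x}(t_0)=0$, the Lyapunov function $V_{\tilde{x}}$ is non-increasing over the entire horizon, so $\|\tilde{x}(t)\|\leqslant\delta$ holds uniformly for all $t\geqslant t_0$ and hence $x(t)\in\widehat{x}(t)\oplus\mathcal{B}_{\delta}(t_0)$ throughout. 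This bound is never reset and never enlarged across segment boundaries, which is exactly why the implemented tube $\mathcal{B}_{\delta}(t_0)$ remains uniform. For $\tilde{\bar{x}}(t)$, I would use the closed-loop dynamics \eqref{eq:xbartilde dynamics}, namely $\dot{\tilde{\bar{x}}}(t)=-k\tilde{\bar{x}}(t)$ under the feedback \eqref{eq:uhat ubar relation} (equivalently the $u(t)$ of \textit{Lemma} \ref{lem:x and xbar ralation}), which guarantees that $\tilde{\bar{x}}$ stays identically zero on a segment whenever it vanishes at the segment's left endpoint.

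The induction then proceeds as follows. In the base case $k=0$, the hypothesis $x_0=x_{ini}=\overline{x}_0$ together with $\widehat{x}(t_0)=x_{ini}$ gives $\tilde{\bar{x}}(t_0)=0$, so $\widehat{x}(t)=\overline{x}(t)$ on $[0,T]$ and in particular $\widehat{x}(T)=\overline{x}_1$. For the inductive step, the crucial observation is that the state-transition relation \eqref{eq:state trasitin dynamics dp}--\eqref{eq:LP state transition dp} forces the terminal node of $m_{k-1}$ to coincide with the initial node of $m_k$, so the concatenated reference $\overline{x}(t)$ is continuous at $t=kT$ with value $\overline{x}_k$. Because $\widehat{x}$ tracks $\overline{x}$ \emph{exactly} on the preceding segment, $\widehat{x}(kT)=\overline{x}_k$, which is precisely the left endpoint of the next primitive; hence $\tilde{\bar{x}}(kT)=0$ and the vanishing of $\tilde{\bar{x}}$ propagates over $[kT,(k+1)T]$. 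Combining $\widehat{x}(t)=\overline{x}(t)$ with $x(t)\in\widehat{x}(t)\oplus\mathcal{B}_{\delta}(t_0)$ yields $x(t)\in\overline{x}(t)\oplus\mathcal{B}_{\delta}(t_0)$ on every segment, and evaluating at the sample instants gives $x_k\in\overline{x}_k\oplus\mathcal{B}_{\delta}(t_0)$ for all $k\in\mathbb{I}_{[1:M]}$.

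I expect the main obstacle to be the careful decoupling at the lattice nodes: the physical state $x_k$ is in general only \emph{inside} the tube around $\overline{x}_k$ and need not equal it, yet the argument requires the \emph{estimated} state $\widehat{x}$ to hit $\overline{x}_k$ exactly at every node so that $\tilde{\bar{x}}$ can be re-certified as zero. The point to make precise is that this is consistent, because $\widehat{x}$ is an internally propagated estimator state whose tracking error $\tilde{\bar{x}}$ obeys the autonomous decaying dynamics \eqref{eq:xbartilde dynamics} independently of $\tilde{x}$, so the two error channels never contaminate one another; the tube radius comes solely from $\tilde{x}$ (which starts at zero at $t_0$ and stays bounded by $\delta$ for all time), while node-matching comes solely from $\tilde{\bar{x}}$. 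Care is also needed to confirm that the control $u(t)=\overline{u}(t)+\nu(x(t),\overline{x}(t),e_x(t))$ stated in the proposition is exactly the per-segment feedback of \textit{Lemma} \ref{lem:x and xbar ralation} assembled over the horizon, with $\overline{u}(t)$ read off from the active primitive $m_k$.
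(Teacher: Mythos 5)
Your proof is correct and follows essentially the same route as the paper's: induction over the motion-primitive segments, with each segment reduced to the single-primitive tube result of \textit{Lemma}~\ref{lem:x and xbar ralation} and the decomposition $x(t)-\overline{x}(t)=\tilde{x}(t)+\tilde{\bar{x}}(t)$ already used there. You in fact make explicit one point the paper's proof leaves implicit in its recursive invocation of the lemma, namely the node-matching argument that $\widehat{x}(kT)=\overline{x}_k$ (so $\tilde{\bar{x}}$ re-vanishes at every lattice node while the uniform bound $\|\tilde{x}(t)\|\leqslant\delta$ persists from $t_0$ without reset); the only ingredient of the paper's proof you omit is the short step using the feasibility constraint \eqref{eq:LP feasibility constraint dp} to place the tube $\overline{x}(t)\oplus\mathcal{B}_{\delta}(t_0)$ inside $\mathcal{X}$, which is not needed for the containment claims as literally stated.
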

\begin{proof}
The following proof is done by the method of induction. Let 
\begin{equation}\label{eq:induction eq 1 dp}
x_k\in \overline{x}_k\oplus \mathcal{B}_{\delta}(t_0) 
\end{equation}
Then if the constraint \eqref{eq:LP feasibility constraint dp} is satisfied then the following holds for all ${t} \in (k,(k+1)T)$
\begin{align}\label{eq:induction eq 11 dp}
&\overline{x}({t})\in \overline{\mathcal{X}}_{_\mathcal{S}}= \Big( \mathcal{X}/\mathcal{O}\Big )\ominus \mathcal{B}_{\delta}(t_0) \subset \mathcal{X} \ominus \mathcal{B}_{\delta}(t_0) \nonumber \\
\Rightarrow & \overline{x}({t})\oplus \mathcal{B}_{\delta}(t_0)\in \mathcal{X}
\end{align}
\noindent Therefore, it is concluded using \eqref{eq:induction eq 1 dp}, \eqref{eq:induction eq 11 dp} and \textit{Lemma} \ref{lem:x and xbar ralation} that with the control $u({t})=\overline{u}({t})+\nu(x(t),\overline{x}_j(t),\overline{u}_j(t),\widehat{\Theta}(t))$, where $\overline{u}(t)$ is the control action utilized in the active motion primitive $m_k$, the following holds true 
\begin{subequations}\label{eq:induction eq 2 dp}
\begin{align}
& x({t})=\overline{x}({t})\oplus \mathcal{B}_{\delta}(t_0), \forall {t} \in (k,(k+1)T)\\
&x_{k+1}=x(t+T)\in \overline{x}(t+T)\oplus \mathcal{B}_{\delta}(t_0)=\overline{x}_{k+1}\oplus \mathcal{B}_{\delta}(t_0)
\end{align}
\end{subequations}
Since, the initial condition of the uncertain system's state satisfy  $x_0=\overline{x}_0=x_{ini}$ and the origin is an interior point of $\mathcal{B}_{\delta}(t_0)$, by recursively utilizing \eqref{eq:induction eq 1 dp}-\eqref{eq:induction eq 2 dp}, it is proved that the claimed assertions hold.  
\end{proof}
\vspace{-0.1in}
\begin{algorithm}
\vspace{0in}
\caption {\small Adaptive lattice-based motion planner}
\begin{algorithmic}[]
\renewcommand{\algorithmicrequire}{\textbf{Offline:}}
\REQUIRE 
\end{algorithmic}
\vspace{-0.07in}
\begin{itemize}
\item Initialize $\mathcal{S}(t_0)=\Psi$ if $t_0=0$ else $\mathcal{S}(t_0)=\mathcal{S}(t)$.   
\item Compute $\mathcal{B}_{\delta}(t_0)$.
\item Obtain the tightened spaces $\mathcal{X}_{\mathcal{S}}$ and  $\mathcal{U}_{\mathcal{S}}$.
\item Discretize $\mathcal{X}_{\mathcal{S}}$ to obtain $\mathcal{X}_{d}$.
\item Obtain $\overline{\Theta}$ following \eqref{eq:overline theta in S}.
\item Obtain $\mathcal{M}(\overline{\Theta})$ from $\mathcal{M}$.
\end{itemize}
\vspace{0in}
\begin{algorithmic}[1]
\renewcommand{\algorithmicensure}{\textbf{Online:}}
\ENSURE \hspace{-0.02in}
\STATE Take inputs $x_{ini}$, $x_f$ and $\mathcal{O}$.
\STATE Solve COCP \eqref{eq:DPLP}.
\STATE Apply the control $u(t)=\overline{u}_j(t)+\nu(x(t),\overline{x}_j(t),\overline{u}_j(t),$ $\widehat{\Theta}(t))$ to \eqref{eq:nonlinear uncertain dynamics} and $\widehat{u}_i(t)=\overline{u}_j(t)-\widehat{\theta}^{u}(t)^{-1}(h(x(t),\overline{x}_j(t),$ $\overline{u}_j(t),\widehat{\Theta}_i(t))$ to \eqref{eq:uibar uhat relation overall} .
\STATE Measure $x(t)$ and compute $\widehat{x}(t)$.
\STATE Update $\mathcal{S}(t)$ following \eqref{eq:new adaptive update law theta}.
\end{algorithmic}
\end{algorithm}
\vspace{-0.15in}
\subsection{Discussion on optimality}
In this section, the optimality of the solution obtained by solving the COCP \eqref{eq:motion primitive OCP} is analysed, with respect to the optimal solution for the uncertain system \eqref{eq:nonlinear uncertain dynamics}, which is considered to exist and would be a known signal if $\Theta(\theta)$ would be known. 
Considering the same discretized state space $\mathcal{X}_{d}$ as considered for COCP \eqref{eq:motion primitive OCP} and $\Theta(\theta)$ to be known, the optimal motion primitive for the system in \eqref{eq:nonlinear uncertain dynamics} would be computed by solving the following
\begin{subequations}\label{eq:motion primitive optimal  OCP}
\begin{align}
\min_{x(t), {u}(t),T_f}& \ \ H=\int_{t_0}^{T_f}l(\overline{x}(t),\overline{u}(t))dt \nonumber \\
& {x}(t_0)={x}_{ini}; \quad {x}(T_f)={x}_f\\
& \dot{{x}}(t)=\theta^x\phi(x(t))+\theta^uu(t)\label{eq:motion primitive optimal nominal dp} \\
& {x}(t)\in \mathcal{X};\ {u}(t)\in \mathcal{U}  \label{eq:motion primitive optimal tightened constraints dp}
\end{align}
optimal  OCP.
\end{subequations}
where ${x}_k$ is any point in the discretized state space and ${x}_{k+1}\in \mathcal{X}_{d}$ is any point in the neighbourhood of ${x}_k$. The following theorem establishes the relation between the optimal solutions from  \eqref{eq:motion primitive OCP} and \eqref{eq:motion primitive optimal  OCP}.
\begin{theorem}\label{thm:optimality}
For $x_k=\overline{x}_k$, if $\Theta, \overline{\Theta}\in \mathcal{S}(t_0)$, then the solution of COCP \eqref{eq:motion primitive optimal  OCP} will lie within the tube dictated by $\mathcal{B}_{\delta}(t_0) $, around the solution of the COCP \eqref{eq:motion primitive OCP}. 
\end{theorem}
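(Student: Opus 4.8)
The plan is to deduce the tube inclusion from the state-error bound already established in \textit{Lemma} \ref{lem:x and xbar ralation}, whose only hypotheses are that the relevant parameters lie in the common model set $\mathcal{S}(t_0)$ and that the compared trajectories share an initial condition --- both of which hold here by $\Theta,\overline{\Theta}\in\mathcal{S}(t_0)$ and $x_k=\overline{x}_k$. First I would fix notation: let $(\overline{x}^{*}(t),\overline{u}^{*}(t))$ be the minimizer of the nominal primitive COCP \eqref{eq:motion primitive OCP}, posed with the tightened constraints \eqref{eq:motion primitive tightened constraints dp}, and let $(x^{*}(t),u^{*}(t))$ be the minimizer of the true primitive COCP \eqref{eq:motion primitive optimal  OCP}. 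Since both problems are posed over the same discretized state space $\mathcal{X}_d$ and, under $x_k=\overline{x}_k$ together with the shared neighbouring node $x_{k+1}=\overline{x}_{k+1}$ of the common lattice, over the same boundary data, the only differences between them are the dynamics parameter ($\Theta$ versus $\overline{\Theta}$) and the constraint tightening by $\mathcal{B}_{\delta}(t_0)$, with $\delta=\sqrt{|\Gamma^{-1}|}\,\textnormal{diam}(\mathcal{S}(t_0))$.

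The core step transports one primitive onto the dynamics of the other through the estimated system \eqref{eq:estimated dynamics}. Taking the nominal optimizer $\overline{x}^{*}$ as the reference, I would initialize $\widehat{x}(t_0)=\overline{x}^{*}(t_0)=x_k$ and drive \eqref{eq:nonlinear uncertain dynamics} with the reconstructed control $u(t)=\overline{u}^{*}(t)+\nu(x(t),\overline{x}^{*}(t),\overline{u}^{*}(t),\widehat{\Theta}(t))$ from \eqref{eq:ubar and u relation}; because $\Theta\in\mathcal{S}(t_0)$, \textit{Lemma} \ref{lem:x and xbar ralation} gives $x(t)\in\overline{x}^{*}(t)\oplus\mathcal{B}_{\delta}(t_0)$ on $[0,T]$. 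As $\overline{x}^{*}$ satisfies the tightened constraints, i.e. $\overline{x}^{*}(t)\in\overline{\mathcal{X}}\ominus\mathcal{B}_{\delta}(t_0)$ and $\overline{u}^{*}(t)\in\mathcal{U}_{\mathcal{S}}$ by \eqref{eq:tightened constraints def dp}, this lifted pair is feasible for the true problem and stays in the tube. The reverse inclusion --- the one literally asserted --- I would obtain by running the same machinery with the roles swapped: since $\overline{\Theta}\in\mathcal{S}(t_0)$ as well, feeding the realized data of the true optimizer $x^{*}$ into the estimated dynamics and chaining \textit{Corollary} \ref{cor:xtilde bound} and \textit{Corollary} \ref{cor:x and xhat relation} (which bound $\|x-\widehat{x}\|$ by $\delta$) with \textit{Corollary} \ref{cor:xhat around xbar} (which lets $\widehat{x}$ track the nominal state) places $x^{*}(t)$ within $\mathcal{B}_{\delta}(t_0)$ of $\overline{x}^{*}(t)$, which is exactly the claim.

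The hard part will be this last reconciliation: \eqref{eq:motion primitive OCP} and \eqref{eq:motion primitive optimal  OCP} are two independent optimizations, so equal endpoints and a small parameter gap do not by themselves force their minimizers to be pointwise close through a cost comparison. The inclusion must instead be extracted from the structural three-way relation between the uncertain, estimated, and nominal dynamics, and the delicate point is to verify that the single ball $\mathcal{B}_{\delta}(t_0)$ used to tighten the constraints in \eqref{eq:tightened constraints def dp} is simultaneously large enough to absorb both the forward and the reverse mismatch. This hinges on the fact that $\delta$ is governed by $\textnormal{diam}(\mathcal{S}(t_0))$, which uniformly bounds every parameter realization in $\mathcal{S}(t_0)$ --- in particular both $\Theta$ and $\overline{\Theta}$ --- and on the control reconstruction $\nu$ being well defined along each trajectory, guaranteed by invertibility of $\widehat{\theta}^{u}(t)$ (Assumption 1). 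Once the consistency of $\delta$ across both directions is secured, the asserted tube inclusion $x^{*}(t)\in\overline{x}^{*}(t)\oplus\mathcal{B}_{\delta}(t_0)$ follows.
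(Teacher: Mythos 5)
Your proposal follows essentially the same route as the paper's proof: both arguments rest on \textit{Lemma} \ref{lem:x and xbar ralation}, using $\Theta,\overline{\Theta}\in\mathcal{S}(t_0)$ and the shared initial condition $x_k=\overline{x}_k$ to reconstruct the true control as $u^*(t)=\overline{u}^*(t)+\nu(x^*(t),\overline{x}^*(t),\overline{u}^*(t),\widehat{\Theta}(t))$ and thereby place $x^*(t)$ in $\overline{x}^*(t)\oplus\mathcal{B}_{\delta}(t_0)$; the paper formalizes this by defining the set $\mathcal{V}$ of nominal feasible pairs $(\overline{x},\overline{u})$ with $\overline{u}=u^*-\nu$ and then asserting $(\overline{x}^*,\overline{u}^*)\in\mathcal{V}$. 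The ``hard part'' you flag --- that the two COCPs are independent optimizations, so nothing forces their minimizers to be linked by the feedback law $\nu$ --- is exactly the step the paper disposes of by the bare statement that there exists a controller $\nu=u^*-\overline{u}^*$ making the nominal optimizer a member of $\mathcal{V}$; neither your sketch nor the paper supplies more than that, so your proposal is faithful to the published argument, including its thinnest point.
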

\begin{proof}
Let $x^*(t)$ be the optimal state trajectory and $u^*(t)$ be the optimal control input computed by solving the COCP \eqref{eq:motion primitive optimal  OCP}, and the optimal dynamics are represented as
\begin{equation}
\dot{x}^*(t)=\theta^x\phi(x^*(t))+\theta^uu^*(t)\nonumber
\end{equation} 
The dynamics for any trajectory obtained by solving \eqref{eq:DPLP} with respect to a nominal parameter $\overline{\Theta}=[\overline{\theta}^x\ \overline{\theta}^u]\in \mathcal{S}(t_0)$ is given as 
\begin{equation}\label{eq:optimality test nominal dynamics}
\dot{\overline{x}}(t)=\overline{\theta}^x\phi(\overline{x}(t))+\overline{\theta}^u\overline{u}(t)
\end{equation}
If $\Theta, \overline{\Theta}\in \mathcal{S}(t_0)$ and $\tilde{\bar{x}}_n(t_0)=x(t_0)-\overline{x}(t_0)\in \mathcal{B}_{\delta}(t_0) $, then utilizing \textit{Lemma} \ref{lem:x and xbar ralation} it holds that a feedback control policy $\nu(x^*(t),\overline{x}_j(t),\overline{u}_j(t),\widehat{\Theta}(t))$ can be computed. It is guaranteed that with a feasible solution \eqref{eq:DPLP} computed as
\begin{subequations}\label{eq:optimality test}
\begin{align}
&\overline{u}(t)=u^*(t)-\nu(x^*(t),\overline{x}_j(t),\overline{u}_j(t),\widehat{\Theta}(t))\\
\vspace{0.1in}
&\hspace{-0.1in}\textnormal{the following holds}\nonumber \\
\vspace{0.1in}
&x^*(t)\in \overline{x}(t)\oplus \mathcal{B}_{\delta}(t_0), \quad \textnormal{if}\quad x(t_0)\in \overline{x}(t_0)\oplus \mathcal{B}_{\delta}(t_0) 
\end{align}
\end{subequations}
Let $\mathcal{V}$ be a set of feasible solutions of \eqref{eq:DPLP} defined as
\begin{equation}
\mathcal{V}\triangleq \{(\overline{x},\overline{u})\ : \ \eqref{eq:optimality test}\}
\end{equation}
Let $(\overline{x}^*(t),\overline{u}^*(t))$ represent the optimal solution obtained by solving COCP \eqref{eq:DPLP}. Since $\Theta, \overline{\Theta}\in \mathcal{S}(t_0)$, $x^*(t_0)=x_k=\overline{x}_k=\overline{x}^*(t_0)$, $u^*(t)\in \mathcal{U}$ and $\overline{u}^*(t)\in \mathcal{U}_{\mathcal{S}}$, then utilizing \textit{Lemma} \ref{lem:x and xbar ralation} it holds that there exists a controller $\nu(x^*(t),\overline{x}_j(t),\overline{u}_j(t),\widehat{\Theta}(t))=u^*(t)-\overline{u}^*(t)$ such that 
\begin{equation}
(\overline{x}^*(t),\overline{u}^*(t)) \in \mathcal{V}
\end{equation}
This proves that the claimed assertion holds.
\end{proof}
\begin{figure*}[t]
    \centering
    \begin{subfigure}{0.32\textwidth}
        \includegraphics[width=\linewidth]{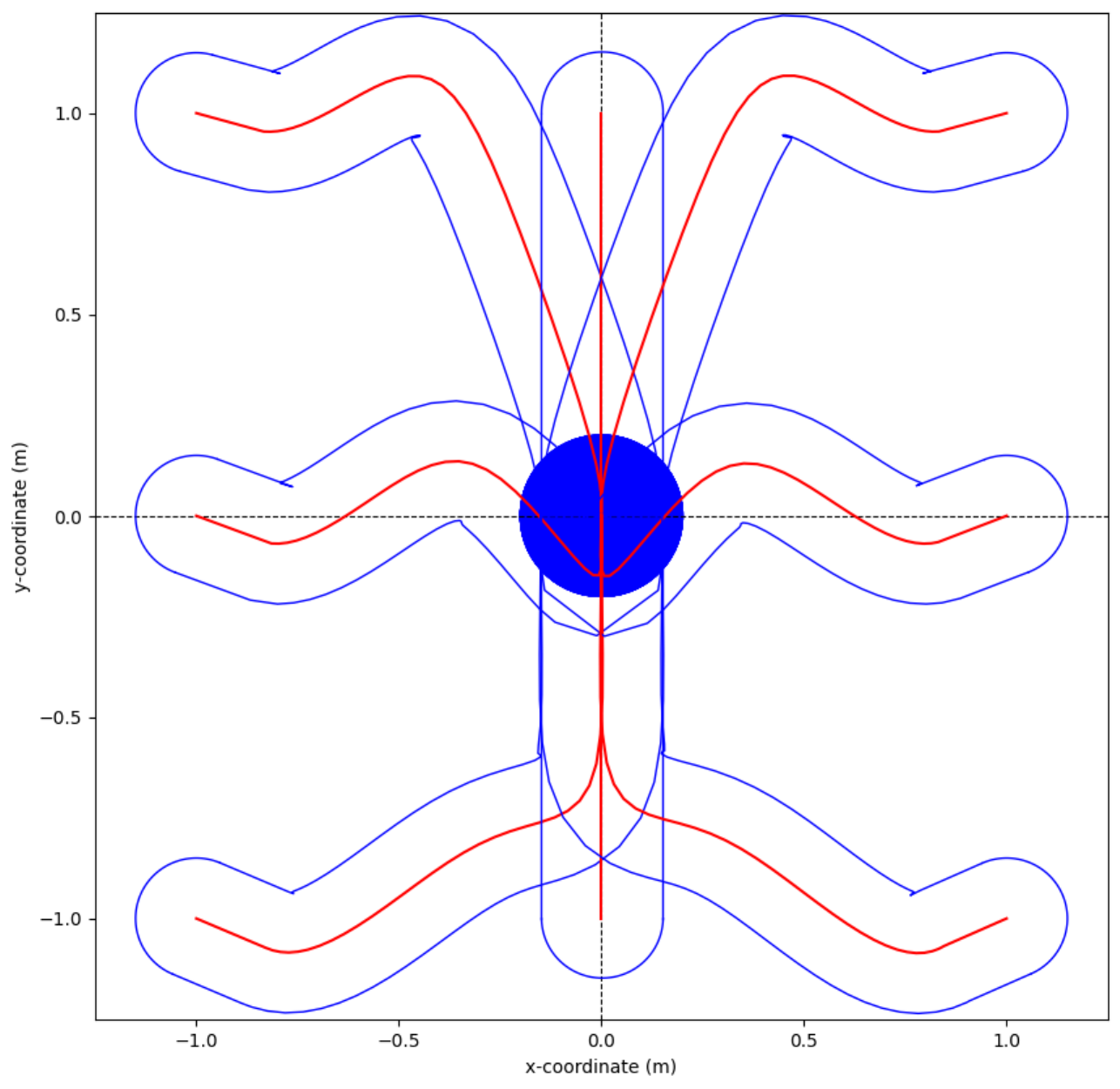}
        \caption{Motion Primitives at initial time}
        \label{fig:MP1}
    \end{subfigure}
    \hfill
    \begin{subfigure}{0.32\textwidth}
        \includegraphics[width=\linewidth]{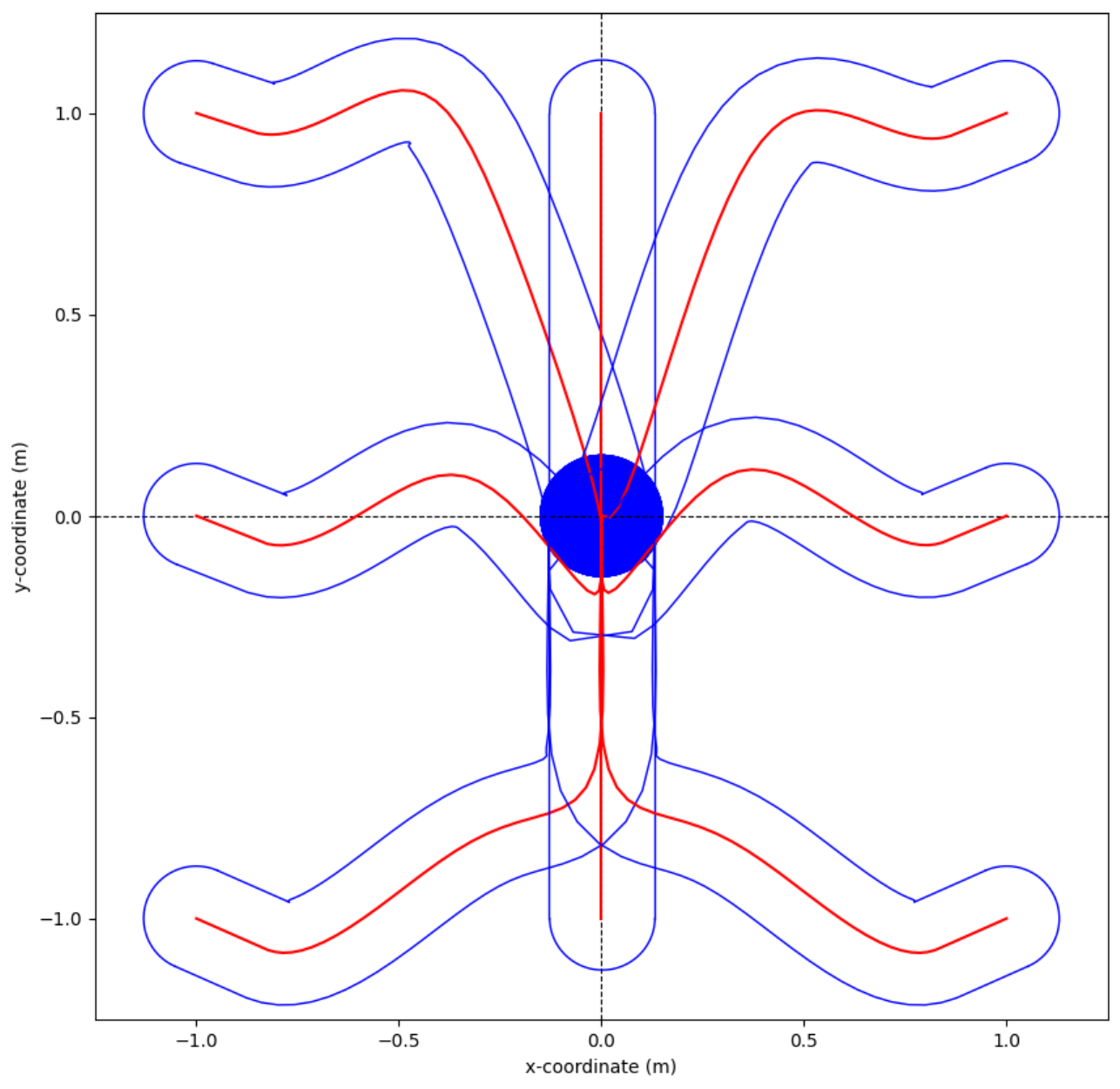}
        \caption{Motion Primitives after 3 executions}
        \label{fig:MP2}
    \end{subfigure}
    \hfill
    \begin{subfigure}{0.32\textwidth}
        \includegraphics[width=\linewidth]{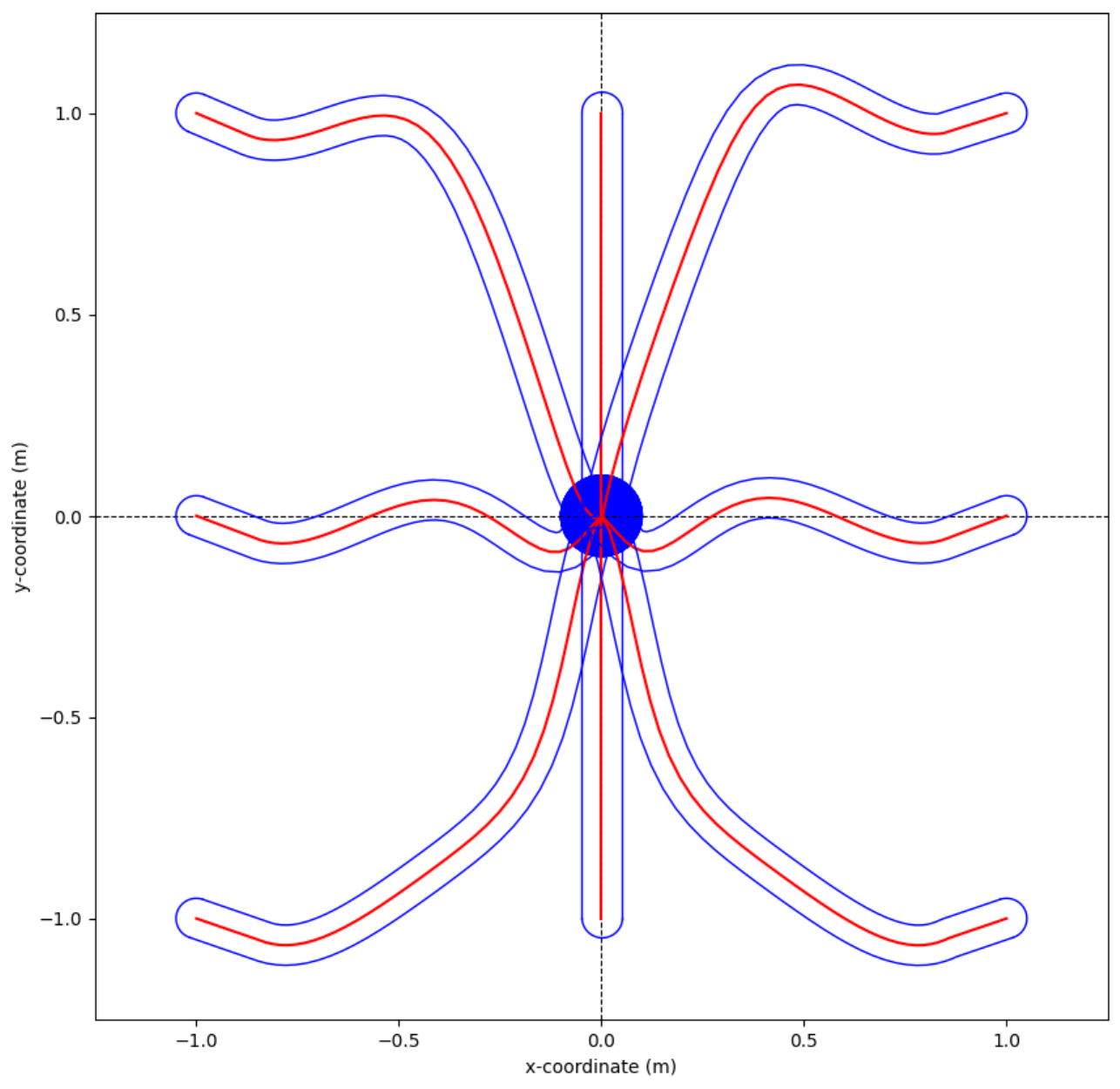}
        \caption{Motion Primitives after 9 executions}
        \label{fig:MP3}
    \end{subfigure}
    \caption{Comparison of motion primitives at different stages}
    \label{fig:MP_all}
\end{figure*}
\begin{corollary}\label{cor:tube reduction and optimality}
Let $(\overline{x}^*(t),\overline{u}^*(t))$ and $({x}^*(t),{u}^*(t))$ represent the optimal solutions of the COCPs \eqref{eq:DPLP} and \eqref{eq:motion primitive optimal  OCP}, respectively. Then, if $\mathcal{S}(t)$ is updated following \eqref{eq:new adaptive update law theta}, then $(\overline{x}^*(t),\overline{u}^*(t))\rightarrow ({x}^*(t),{u}^*(t))$.
\end{corollary}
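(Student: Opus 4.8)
The plan is to combine the tube-containment statement of \textit{Theorem} \ref{thm:optimality} with the monotone contraction of the model set from \textit{Theorem} \ref{thm:size of St}, and to show that the uniform tube radius $\delta\triangleq\sqrt{|\Gamma^{-1}|}\,\textnormal{diam}(\mathcal{S}(t_0))$ is driven to zero, so that the tube collapses onto a single trajectory. First I would recall from \textit{Theorem} \ref{thm:optimality} that, whenever $\Theta,\overline{\Theta}\in\mathcal{S}(t_0)$ and the endpoints coincide, the optimal solution $(x^*(t),u^*(t))$ of \eqref{eq:motion primitive optimal  OCP} lies inside the tube $\mathcal{B}_{\delta}(t_0)$ built around the optimal solution $(\overline{x}^*(t),\overline{u}^*(t))$ of \eqref{eq:DPLP}. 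Since $\delta$ is directly proportional to $\textnormal{diam}(\mathcal{S}(t_0))$, it therefore suffices to prove that $\textnormal{diam}(\mathcal{S}(t))\to 0$ as the planner is re-run and $\mathcal{S}(t)$ is updated through \eqref{eq:new adaptive update law theta}.

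Next I would establish $\textnormal{diam}(\mathcal{S}(t))\to 0$. By \textit{Theorem} \ref{thm:size of St} the map $t\mapsto\textnormal{diam}(\mathcal{S}(t))$ is non-increasing, and being bounded below by zero it converges to some limit $d_\infty\geqslant 0$. To identify $d_\infty=0$ I would use the adaptive analysis underlying \textit{Theorem} \ref{thm:size of St}: the inter-vertex error $\Delta\widehat{x}_{(i,j)}(t)$ obeys the contracting dynamics \eqref{eq:delta V req 1}--\eqref{eq:delta V req 2}, so that the accompanying vertex-parameter differences $\Delta\widehat{\psi}_{(i,j)}(t)$ are forced to collapse whenever the regressor $X(t)$ generated along the executed motion is sufficiently rich. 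Because $\Theta\in\mathcal{S}(t)$ for all $t$ by \textit{Lemma} \ref{lem:theta in S}, the contraction of the vertices simultaneously traps both $\Theta$ and the selected nominal parameter $\overline{\Theta}$ inside a set whose diameter tends to $d_\infty$; under this richness condition the contraction forces $d_\infty=0$, hence $\overline{\Theta}\to\Theta$ and, by \textit{Corollary} \ref{cor:x and xhat relation}, $\delta\to 0$.

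Finally, with $\delta\to 0$ the ball $\mathcal{B}_{\delta}(t_0)$ shrinks to $\{0\}$, so the tightened sets $\overline{\mathcal{X}}_{\mathcal{S}},\mathcal{U}_{\mathcal{S}}$ recover $\overline{\mathcal{X}},\mathcal{U}$, and the nominal dynamics \eqref{eq:motion primitive nominal dp} with $\overline{\Theta}\to\Theta$ recover the true dynamics \eqref{eq:motion primitive optimal nominal dp}; that is, the nominal COCP \eqref{eq:motion primitive OCP} converges to the true COCP \eqref{eq:motion primitive optimal  OCP}. Applying \textit{Theorem} \ref{thm:optimality} at each stage, the true optimizer remains within a tube of vanishing radius around the nominal optimizer, which yields $(\overline{x}^*(t),\overline{u}^*(t))\to(x^*(t),u^*(t))$.

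The main obstacle is the passage from the monotonicity of \textit{Theorem} \ref{thm:size of St} to the stronger claim $d_\infty=0$: monotone contraction alone does not exclude a strictly positive limit, and converting the inter-vertex state-error convergence into genuine shrinkage of the uncertainty vertices requires a persistency-of-excitation condition on the $X(t)$ produced by the executed primitives. A secondary subtlety is the continuity, and ideally uniqueness, of the optimizer of \eqref{eq:motion primitive OCP} as a function of $\overline{\Theta}$ and of the constraint tightening, which is precisely what licenses passing from the set-level convergence of the two COCPs to the convergence of the optimal pairs.
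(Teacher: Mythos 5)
Your proposal follows essentially the same route as the paper: invoke Theorem \ref{thm:optimality} to place $(x^*(t),u^*(t))$ in the tube $\mathcal{B}_{\delta}(t_0)$ around $(\overline{x}^*(t),\overline{u}^*(t))$, invoke Theorem \ref{thm:size of St} to argue that $\textnormal{diam}(\mathcal{S}(t))$ decreases under the update law \eqref{eq:new adaptive update law theta}, and conclude that the shrinking tube forces the two optimizers together. The paper's own proof is exactly this three-step chain and nothing more.

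Where you go beyond the paper is in flagging, correctly, that Theorem \ref{thm:size of St} only delivers monotone non-increase of $\textnormal{diam}(\mathcal{S}(t))$, which guarantees convergence to some $d_\infty\geqslant 0$ but not $d_\infty=0$; without a persistency-of-excitation condition on the regressor $X(t)$ along the executed primitives, the vertex differences $\Delta\widehat{\psi}_{(i,j)}(t)$ need not collapse (the Lyapunov argument in \eqref{eq:Vdelta dot} only drives $\Delta\widehat{x}_{(i,j)}(t)\to 0$, not $\Delta\widehat{\psi}_{(i,j)}(t)\to 0$), and the tube radius $\delta$ need not vanish. The paper's proof silently elides this step by passing from ``the size of $\mathcal{S}(t)$ reduces'' to the asymptotic claim, so your proposal is, if anything, more honest about what is actually established. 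Your second caveat, that convergence of the feasible tubes to a point still requires some continuity or uniqueness of the optimizer of \eqref{eq:motion primitive OCP} with respect to $\overline{\Theta}$ and the constraint tightening, is likewise a real hypothesis that neither your argument nor the paper's supplies. In short: same skeleton as the paper, with the two genuine gaps correctly named rather than papered over.
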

\begin{proof}
It is inferred from \textit{Theorem} \ref{thm:optimality} that the optimal solutions of COCPs \eqref{eq:DPLP} and \eqref{eq:motion primitive optimal  OCP} stay within a region, dictated by $\mathcal{B}_{\delta}(t_0) $. It is claimed in \textit{Theorem} \ref{thm:size of St}, that the size of $\mathcal{S}(t) $ reduces over time if $\mathcal{S}(t)$ is updated utilizing \eqref{eq:new adaptive update law theta}. This in turn leads to reduction in the size of $\mathcal{B}_{\delta}(t_0)$. Therefore, utilizing \textit{Theorem} \ref{thm:optimality}, it is inferred that the claimed assertion holds.
\end{proof}
\begin{remark}
The overall optimality of the solution obtained from the motion planning COCP \eqref{eq:DPLP} is dependent on the resolution of the discretized state space $\mathcal{X}_{d}$. 
\end{remark}
\begin{remark}
From \textit{Corollary} \ref{cor:tube reduction and optimality} it is inferred that updating $\mathcal{S}(t)$ following \eqref{eq:new adaptive update law theta} leads to possible reduction in the size of tubes over time, which allows the motion planner in \eqref{eq:DPLP} to explore more feasible state space. This in turn potentially allows the planner to exploit the navigable regions which would have been infeasible with a large tube size, thus potentially improving the quality of the overall motion plan.  
\end{remark}
\section{Simulation Results}
To verify the effectiveness of the proposed control scheme the following quadrotor dynamics is considered\cite{dronedyna}:
\begin{align*}
\ddot{\phi} &= a_1 \dot{\theta} \dot{\psi} + c_1 \dot{\theta} \Omega +  b_1 u_1 \\
\ddot{\theta} &= a_2 \dot{\phi} \dot{\psi} + c_2 \dot{\phi} \Omega +  b_2 u_2 \\
\ddot{\psi} &= a_3 \dot{\phi} \dot{\theta} +  b_3 u_3 \\
\ddot{x} &= \frac{u_4 u_x}{m} \\
\ddot{y} &= \frac{u_4 u_y}{m} \\
\ddot{z} &= \frac{u_4 \cos\phi \cos\theta - mg}{m}
\end{align*}
where the symbols are defined in Table \ref{tab:sim_params} with
\begin{align*}
u_x &= \sin\psi \sin\phi + \cos\psi \sin\theta \cos\phi \\
u_y &= \sin\psi \sin\theta \cos\phi - \cos\psi \sin\phi
\end{align*}
\vspace{-0.05in}
\begin{table}[h!]
\centering
\caption{Nomenclature of Quadrotor System Parameters}
\begin{tabular}{p{4cm} p{3.8cm}}
\toprule
\textbf{Parameter} & \textbf{Value} \\
\midrule
$(x,~ y,~ z)$ & Quadrotor position in Earth coordinate frame \vspace{0.03in}\\
$(\phi, ~ \theta, ~ \psi)$ & Quadrotor roll, pitch and yaw angles\vspace{0.03in}\\
Mass of the quadrotor ($m$) & 0.61 kg \vspace{0.02in}\\
Moment of inertia about x-axis ($I_{xx}$) & $1.54 \times 10^{-2}$ kg·m$^2$ \vspace{0.03in}\\
Moment of inertia about y-axis ($I_{yy}$) & $1.54 \times 10^{-2}$ kg·m$^2$ \vspace{0.03in}\\
Moment of inertia about z-axis ($I_{zz}$) & $3.09 \times 10^{-2}$ kg·m$^2$ \vspace{0.03in}\\
Length of the quadrotor arm ($l$) & 0.305 m \vspace{0.03in}\\
 $a_1 = \frac{I_{yy} - I_{zz}}{I_{xx}}$ & -1.0065 \vspace{0.03in}\\
 $a_2 = \frac{I_{zz} - I_{xx}}{I_{yy}}$ & 1.0065 \vspace{0.03in}\\
 $a_3 = \frac{I_{xx} - I_{yy}}{I_{zz}}$ & 0 \vspace{0.03in}\\
 $b_1 = \frac{l}{I_{xx}}$ & 19.8052 \vspace{0.03in}\\
 $b_2 = \frac{l}{I_{yy}}$ & 19.8052 \vspace{0.03in}\\
$b_3 = \frac{1}{I_{zz}}$ & 32.3625 \vspace{0.03in}\\
$c_1 $ & Unknown \vspace{0.03in}\\
$c_2 $ & Unknown \vspace{0.03in}\\
Gravitational coefficient $g$ & $9.81$ m/s$^2$\vspace{0.03in}\\
$(u_1, ~u_2,~u_3)$ & Control input for roll, pitch and yaw \vspace{0.03in}\\
$T$ & Net thrust \vspace{0.03in}\\
$\Omega$ & Net rotor spin rate \vspace{0.0in}\\
\bottomrule
\end{tabular}
\label{tab:sim_params}
\end{table}
\begin{figure}[!h]
    \centering
    \includegraphics[width=0.4\textwidth]{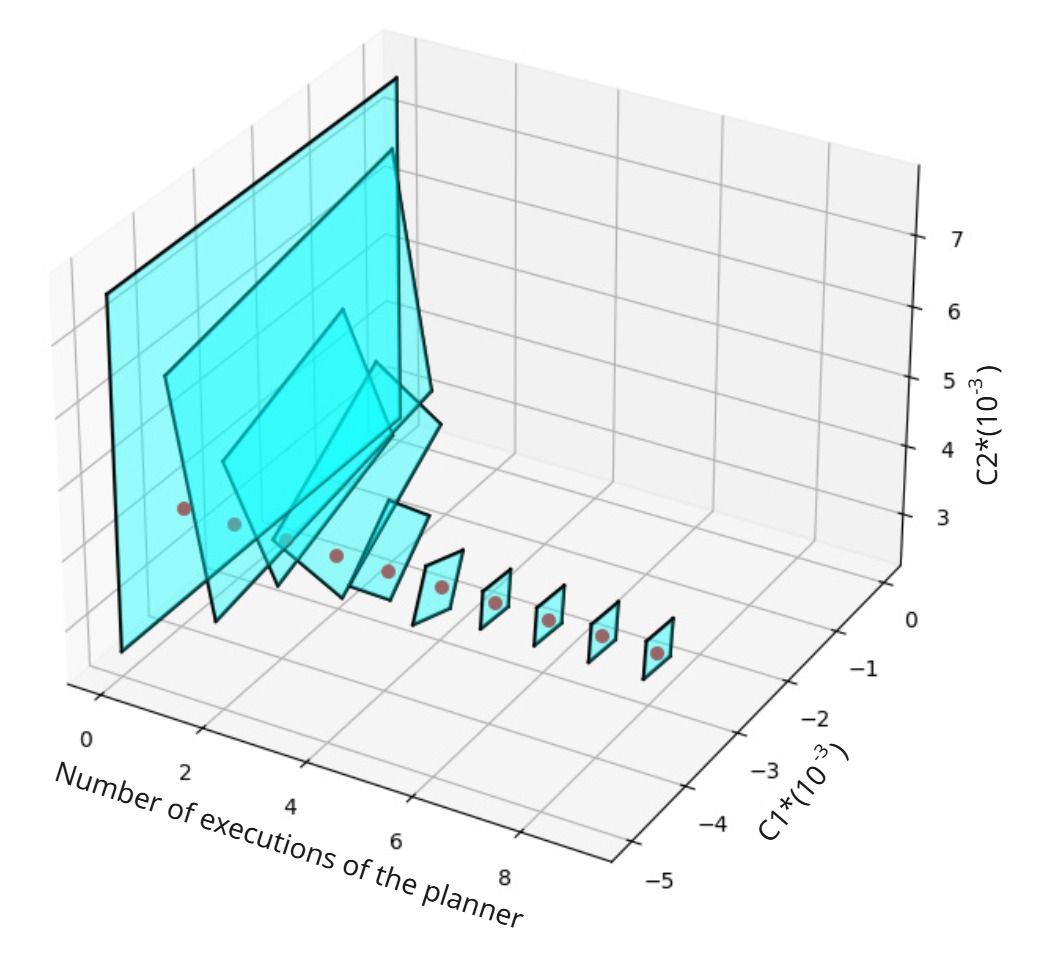}
    \caption{Evolution of bounds (time explicit)}
    \label{fig:MP1}
\end{figure}
\begin{figure}[!h]
    \centering
    \includegraphics[width=0.4\textwidth]{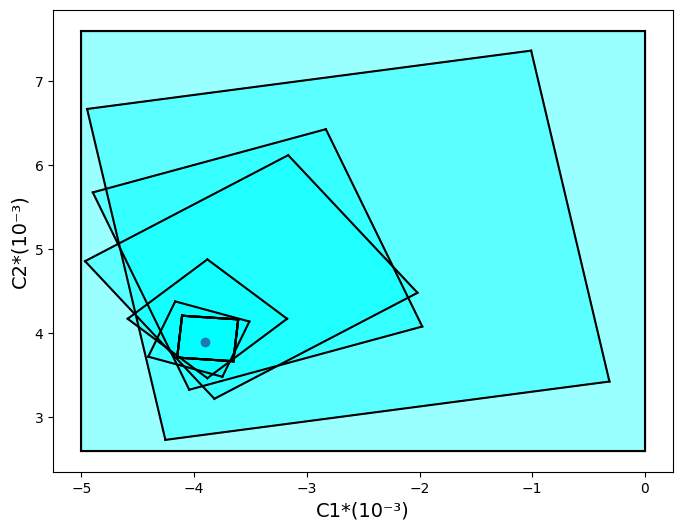}
    \caption{Evolution of bounds (time implicit)}
    \label{fig:MP1}
\end{figure}
\noindent  Initially, multiple estimates of the static nominal model are considered, along with a single estimate for the initial condition of the dynamic model to be estimated. In this experiment the drone parameters $c_i, i\in [1,2]$ are considered to be uncertain\footnote{Ideally all the drone parameters \textit{i.e.} $a_i,b_i, \forall i\in [1,2,3]$ can be chosen to be uncertain along with $c_i,\forall i\in [1,2]$. However, for limitation of plotting beyond 3 dimensional plane, the parametric uncertainty is restricted to be a two dimensional vector.} and the lumped uncertainty parameter $\Theta =[c_1,c_2]^T\in \Psi$, where $\Psi$ is the domain of uncertainty given as 
\begin{align}
    \Psi \triangleq & \textnormal{co}\left \{(-5e^{-3},2.25e^{-3}), (0,2.25e^{-3}), \right .\nonumber \\
    & \left . (0,7.75e^{-3}),(-5e^{-3},7.75e^{-3})\right \} \nonumber
\end{align}
\begin{figure}[!h]
    \centering
    \includegraphics[width=0.4\textwidth]{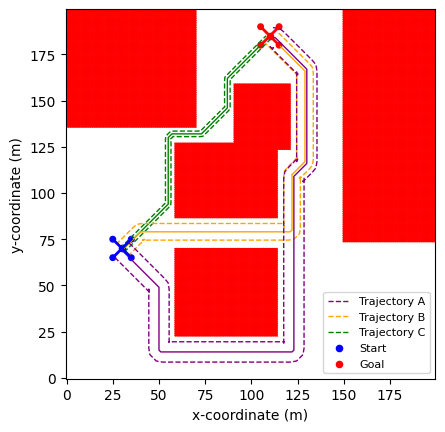}
    \caption{Simulation}
    \label{fig:simulation}
\end{figure}

\noindent The adaptive model set $\mathcal{S}(t)$ is also initialized to be $\mathcal{S}(t_0)=\Psi$. The drone is simulated to repeatedly fly between a predefined start and goal position. Offline, a finite set of nominal parameter values is sampled from the initial model set $\Psi$, and a corresponding library of motion primitives is generated for each nominal parameter. Online, one nominal parameter is selected for each simulation run (following \eqref{eq:overline theta in S}) from the sampled set, and the associated motion primitives are employed to execute the trajectory. The motion primitives are equipped with tubes whose diameter is proportional to the diameter of $\mathcal{S}(t)$. During runtime, an adaptive learning module collects state and input data, and updates the model set as well as the dynamic parameter estimate using a gradient descent-based algorithm as given in \eqref{eq:new adaptive update law theta}. Fig. $2$ and $3$ show the evolution of the model set containing the uncertain parameters at the beginning of executing a motion plan, for multiple subsequent runs of the motion planner. It is seen that the size of the model set gradually reducers over time while true parameter remains within the updated model set fo all time instants. Prior to each subsequent run, the tube size is recomputed based on the reduced diameter of the updated model set, and a new nominal parameter and its corresponding motion primitives are selected using \eqref{eq:overline theta in S}. The plots of the motion primitives along with the corresponding tubes, for various execution instances are shown in Fig. 1. This process is iterated until the model set diameter converges to a stable value significantly smaller than the initial uncertainty bound, ensuring improved model confidence over repeated executions. The overall motion planning performance is shown in Fig. 4. Initially, the bounds of the system parameters are large hence the drone has to take the longest path (Trajectory A). It is seen that as better knowledge of the uncertainty is obtained over time, the conservatism in tube size is also reduced. This enables the motion planner to generate shorter and better path around the obstacles to reach the goal state, as seen after 3 executions (Trajectory B) and 9 executions (Trajectory C) of the motion planner.
\section{Conclusion}
\noindent This paper presents a novel adaptive lattice-based motion planning framework for nonlinear systems with parametric uncertainties operating in cluttered environments. By systematically integrating a multi-model adaptive identification strategy with a structured lattice-based planner, the proposed method enables safe, robust, and resolution-optimal trajectory generation. Unlike conventional robust planners that rely solely on worst-case uncertainty bounds, this approach dynamically refines both the estimated model parameters and the associated uncertainty set using real-time input-output data. The refinement directly improves the quality of the selected motion primitives and allows for progressively tighter safety tubes around the trajectories, ensuring collision avoidance even during the learning phase. Theoretical analysis guarantees that the adaptive learning mechanism reduces the model estimation error and the diameter of the uncertainty set over time. Consequently, the safety margins shrink, and the executed trajectories converge closer to the true optimal path of the system. The effectiveness and adaptability of the proposed planner are demonstrated through simulations involving a drone modeled by Euler-Lagrange dynamics with uncertain parameters, operating in a cluttered workspace. This work contributes a principled hybrid motion planning framework that effectively bridges the gap between learning-based adaptability and robust, model-based safety guarantees. Future work may explore the extension of this approach to more general classes of uncertainties, high-dimensional systems, and real-world hardware implementations.
\section{Acknowledgment}
\noindent We would like to thank Ananth Rachakonda for his valuable assistance in setting up the simulation experiments during the initial phase of this work.

\bibliographystyle{ieeetr}
\end{document}